\newcommand{\noun}[1]{\textsc{#1}}
\numberwithin{figure}{section}
\numberwithin{equation}{section}
\newcommand{\lyxaddress}[1]{
	\par {\raggedright #1
	\vspace{1.4em}
	\noindent\par}
}
\theoremstyle{plain}
\newtheorem{thm}{\protect\theoremname}[section]
\theoremstyle{definition}
\newtheorem{example}[thm]{\protect\examplename}
\theoremstyle{plain}
\newtheorem{prop}[thm]{\protect\propositionname}
\theoremstyle{remark}
\newtheorem{rem}[thm]{\protect\remarkname}
\theoremstyle{plain}
\newtheorem{cor}[thm]{\protect\corollaryname}
\providecommand{\corollaryname}{Corollary}
\providecommand{\examplename}{Example}
\providecommand{\propositionname}{Proposition}
\providecommand{\remarkname}{Remark}
\providecommand{\theoremname}{Theorem}
\begin{document}

\global\long\def\ga{\alpha}%
\global\long\def\gb{\beta}%
\global\long\def\ggm{\gamma}%
\global\long\def\go{\omega}%
\global\long\def\ge{\epsilon}%
\global\long\def\gs{\sigma}%
\global\long\def\gd{\delta}%
\global\long\def\gD{\Delta}%
\global\long\def\vph{\varphi}%
\global\long\def\gf{\varphi}%
\global\long\def\gk{\kappa}%
\global\long\def\gl{\lambda}%

\global\long\def\eps{\varepsilon}%
\global\long\def\epss#1#2{\varepsilon_{#2}^{#1}}%
\global\long\def\ep#1{\eps_{#1}}%

\global\long\def\wh#1{\widehat{#1}}%

\global\long\def\spec#1{\textsf{#1}}%

\global\long\def\ui{\wh{\boldsymbol{\imath}}}%
\global\long\def\uj{\wh{\boldsymbol{\jmath}}}%
\global\long\def\uk{\widehat{\boldsymbol{k}}}%

\global\long\def\uI{\widehat{\mathbf{I}}}%
\global\long\def\uJ{\widehat{\mathbf{J}}}%
\global\long\def\uK{\widehat{\mathbf{K}}}%

\global\long\def\bs#1{\boldsymbol{#1}}%
\global\long\def\vect#1{\mathbf{#1}}%
\global\long\def\bi#1{\textbf{\emph{#1}}}%

\global\long\def\uv#1{\widehat{\boldsymbol{#1}}}%
\global\long\def\cross{\times}%

\global\long\def\ddt{\frac{\dee}{\dee t}}%
\global\long\def\dbyd#1{\frac{\dee}{\dee#1}}%
\global\long\def\dby#1#2{\frac{\partial#1}{\partial#2}}%

\global\long\def\vct#1{\mathbf{#1}}%

\global\long\def\partialby#1#2{\frac{\partial#1}{\partial x^{#2}}}%
\newcommandx\parder[2][usedefault, addprefix=\global, 1=]{\frac{\partial#2}{\partial#1}}%

\global\long\def\oneto{1,\dots,}%
\global\long\def\mi#1{\boldsymbol{#1}}%
\global\long\def\mii{\mi I}%

\global\long\def\fall{,\quad\text{for all}\quad}%

\global\long\def\reals{\mathbb{R}}%

\global\long\def\rthree{\reals^{3}}%
\global\long\def\rsix{\reals^{6}}%
\global\long\def\rn{\reals^{n}}%
\global\long\def\rt#1{\reals^{#1}}%

\global\long\def\les{\leqslant}%
\global\long\def\ges{\geqslant}%

\global\long\def\dee{\textrm{d}}%
\global\long\def\di{d}%

\global\long\def\from{\colon}%
\global\long\def\tto{\longrightarrow}%

\global\long\def\abs#1{\left|#1\right|}%

\global\long\def\isom{\cong}%

\global\long\def\comp{\circ}%

\global\long\def\cl#1{\overline{#1}}%

\global\long\def\fun{\varphi}%

\global\long\def\interior{\textrm{Int}\,}%

\global\long\def\sign{\textrm{sign}\,}%
\global\long\def\sgn#1{(-1)^{#1}}%
\global\long\def\sgnp#1{(-1)^{\abs{#1}}}%

\global\long\def\dimension{\textrm{dim}\,}%

\global\long\def\esssup{\textrm{ess}\,\sup}%

\global\long\def\ess{\textrm{{ess}}}%

\global\long\def\kernel{\mathop{\textrm{Kernel}}}%

\global\long\def\support{\textrm{supp}\,}%

\global\long\def\image{\mathrm{Image}\,}%

\global\long\def\diver{\mathop{\textrm{div}}}%

\global\long\def\sp{\mathop{\textrm{span}}}%

\global\long\def\resto#1{|_{#1}}%
\global\long\def\incl{\iota}%
\global\long\def\iden{\imath}%
\global\long\def\idnt{\textrm{Id}}%
\global\long\def\rest{\rho}%
\global\long\def\extnd{e_{0}}%

\global\long\def\proj{\textrm{pr}}%

\global\long\def\ino#1{\int_{#1}}%

\global\long\def\half{\frac{1}{2}}%
\global\long\def\shalf{{\scriptstyle \half}}%
\global\long\def\third{\frac{1}{3}}%

\global\long\def\empt{\varnothing}%

\global\long\def\paren#1{\left(#1\right)}%
\global\long\def\bigp#1{\bigl(#1\bigr)}%
\global\long\def\biggp#1{\biggl(#1\biggr)}%
\global\long\def\Bigp#1{\Bigl(#1\Bigr)}%

\global\long\def\braces#1{\left\{  #1\right\}  }%
\global\long\def\sqbr#1{\left[#1\right]}%
\global\long\def\anglep#1{\left\langle #1\right\rangle }%

\global\long\def\lsum{{\textstyle \sum}}%

\global\long\def\bigabs#1{\bigl|#1\bigr|}%

\global\long\def\lisub#1#2#3{#1_{1}#2\dots#2#1_{#3}}%

\global\long\def\lisup#1#2#3{#1^{1}#2\dots#2#1^{#3}}%

\global\long\def\lisubb#1#2#3#4{#1_{#2}#3\dots#3#1_{#4}}%

\global\long\def\lisubbc#1#2#3#4{#1_{#2}#3\cdots#3#1_{#4}}%

\global\long\def\lisubbwout#1#2#3#4#5{#1_{#2}#3\dots#3\widehat{#1}_{#5}#3\dots#3#1_{#4}}%

\global\long\def\lisubc#1#2#3{#1_{1}#2\cdots#2#1_{#3}}%

\global\long\def\lisupc#1#2#3{#1^{1}#2\cdots#2#1^{#3}}%

\global\long\def\lisupp#1#2#3#4{#1^{#2}#3\dots#3#1^{#4}}%

\global\long\def\lisuppc#1#2#3#4{#1^{#2}#3\cdots#3#1^{#4}}%

\global\long\def\lisuppwout#1#2#3#4#5#6{#1^{#2}#3#4#3\wh{#1^{#6}}#3#4#3#1^{#5}}%

\global\long\def\lisubbwout#1#2#3#4#5#6{#1_{#2}#3#4#3\wh{#1}_{#6}#3#4#3#1_{#5}}%

\global\long\def\lisubwout#1#2#3#4{#1_{1}#2\dots#2\widehat{#1}_{#4}#2\dots#2#1_{#3}}%

\global\long\def\lisupwout#1#2#3#4{#1^{1}#2\dots#2\widehat{#1^{#4}}#2\dots#2#1^{#3}}%

\global\long\def\lisubwoutc#1#2#3#4{#1_{1}#2\cdots#2\widehat{#1}_{#4}#2\cdots#2#1_{#3}}%

\global\long\def\twp#1#2#3{\dee#1^{#2}\wedge\dee#1^{#3}}%

\global\long\def\thp#1#2#3#4{\dee#1^{#2}\wedge\dee#1^{#3}\wedge\dee#1^{#4}}%

\global\long\def\fop#1#2#3#4#5{\dee#1^{#2}\wedge\dee#1^{#3}\wedge\dee#1^{#4}\wedge\dee#1^{#5}}%

\global\long\def\idots#1{#1\dots#1}%
\global\long\def\icdots#1{#1\cdots#1}%

\global\long\def\norm#1{\|#1\|}%

\global\long\def\nonh{\heartsuit}%

\global\long\def\nhn#1{\norm{#1}^{\nonh}}%

\global\long\def\trps{^{{\scriptscriptstyle \textsf{T}}}}%

\global\long\def\testfuns{\mathcal{D}}%

\global\long\def\ntil#1{\tilde{#1}{}}%

\global\long\def\alt{\mathfrak{A}}%

\global\long\def\pou{\eta}%

\global\long\def\ext{{\textstyle \bigwedge}}%
\global\long\def\forms{\Omega}%

\global\long\def\dotwedge{\dot{\mbox{\ensuremath{\wedge}}}}%

\global\long\def\vel{\theta}%

\global\long\def\contr{\raisebox{0.4pt}{\mbox{\ensuremath{\lrcorner}}}\,}%

\global\long\def\lie{\mathcal{L}}%

\global\long\def\L#1{L\bigl(#1\bigr)}%

\global\long\def\vvforms{\ext^{\dims}\bigp{T\spc,\vbts^{*}}}%

\global\long\def\spc{\mathcal{S}}%
\global\long\def\sptm{\mathcal{E}}%
\global\long\def\evnt{e}%
\global\long\def\frame{\Phi}%

\global\long\def\timeman{\mathcal{T}}%
\global\long\def\zman{t}%
\global\long\def\dims{n}%
\global\long\def\m{\dims-1}%
\global\long\def\dimw{m}%

\global\long\def\wc{z}%

\global\long\def\util#1{\raisebox{-5pt}{\ensuremath{{\scriptscriptstyle \sim}}}\!\!\!#1}%

\global\long\def\utilJ{\util J}%

\global\long\def\utilRho{\util{\rho}}%

\global\long\def\body{B}%
\global\long\def\man{\mathcal{M}}%
\global\long\def\var{\mathcal{V}}%

\global\long\def\bdry{\partial}%

\global\long\def\gO{\varOmega}%

\global\long\def\reg{\mathcal{R}}%
\global\long\def\bdrr{\bdry\reg}%

\global\long\def\bdom{\bdry\gO}%

\global\long\def\bndo{\partial\gO}%

\global\long\def\pis{x}%
\global\long\def\xo{\pis_{0}}%

\global\long\def\pib{X}%

\global\long\def\pbndo{\Gamma}%
\global\long\def\bndoo{\pbndo_{0}}%
 
\global\long\def\bndot{\pbndo_{t}}%

\global\long\def\cloo{\cl{\gO}}%

\global\long\def\nor{\mathbf{n}}%

\global\long\def\dA{\,\dee A}%

\global\long\def\dV{\,\dee V}%

\global\long\def\eps{\varepsilon}%

\global\long\def\affsp{\mathbf{A}}%
\global\long\def\pt{p}%

\global\long\def\vbase{e}%
\global\long\def\sbase{\mathbf{e}}%
\global\long\def\msbase{\mathfrak{e}}%
\global\long\def\vect{v}%

\global\long\def\vf{w}%

\global\long\def\avf{u}%

\global\long\def\stn{\varepsilon}%

\global\long\def\rig{r}%

\global\long\def\rigs{\mathcal{R}}%

\global\long\def\qrigs{\!/\!\rigs}%

\global\long\def\qd{\!/\,\!\kernel\diffop}%

\global\long\def\dis{\chi}%

\global\long\def\csa{\mathcal{Q}}%
\global\long\def\csb{\mathcal{P}}%
\global\long\def\fk{\vph}%
\global\long\def\cs{p}%
\global\long\def\conf{\kappa}%

\global\long\def\jac{J}%
\global\long\def\vs{\mathcal{V}}%
\global\long\def\avs{\mathcal{U}}%
\global\long\def\svs{\mathcal{W}}%
\global\long\def\v{v}%
\global\long\def\av{u}%

\global\long\def\fc{f}%
\global\long\def\afc{g}%
\global\long\def\du{^{*}}%

\global\long\def\opt{\gO}%

\global\long\def\yi{\mathbf{M}}%
\global\long\def\fs{S}%
\global\long\def\h{_{\mathrm{H}}}%

\global\long\def\g{\mathbf{g}}%

\title{Optimization of Robot Grasping Forces and Worst Case Loading}
\author{Or Elmackias, Tami Zaretzky, and Reuven Segev}
\maketitle

\lyxaddress{Department of Mechanical Engineering\\
Ben-Gurion University of the Negev\\
Beer-Sheva, Israel\\
Corresponding author: rsegev@bgu.ac.il}

\emph{\noindent Keywords:} Robot grasping; optimization; minimal
norm of grasping forces; bounded grasping forces; grasping sensitivity;
worst case loading.

\medskip{}

\emph{\noindent Mathematics Subject Classification (MSC):} 70E60;
74P10; 46N10
\begin{abstract}
\emph{Abstract: }We consider the optimization of the vector of grasping
forces that support a known generalized force acting on the grasped
object---a rigid body or a mechanism. Working in the framework of
finite dimensional normed vector spaces and their dual spaces, the
cost function to be minimized is assumed to be a norm on the space
of grasping forces. We present an expression for the optimum which
depends on the external force and the kinematics of the grasping system.
Next, assuming that optimal grasping forces are applied using force
control, and assuming that there is a bound on the norm of the admissible
grasping forces, we characterize the largest norm of an external force
that the grasping system may support, that is, the norm of the worst
case loading that may be applied and still be supported. A few simple
examples are given for the sake of illustration.

\end{abstract}

\section{Introduction}

The mechanical analysis of robot grasping is under ongoing active
research. See, for example, the comprehensive review of the subject
in \cite{Burdick2019}, published recently, and references cited
therein. Within this general area of research, various optimization
problems may be considered. In \cite[Chatper 9]{Burdick2019}, the
minimal number of fingers in the grasping mechanism is studied. In
\cite[Chatper 12]{Burdick2019}, the problem of ``gentlest grasp''
is defined, where the maximum grasping force is minimized. Another
study which is relevant to the present work is \cite{Bologni1988},
where the author considers a planar rigid body and searches for the
optimal 3 points of grasping normal and friction forces so that the
maximal external force may be supported. Additional work on problems
related to optimization of grasping may found in \cite{Ohka2010,Fasoulas2012}.

This article is concerned with the optimization of grasping forces
that robots apply to objects they carry, where the objects may be
rigid bodies or mechanisms. Assume that the external forces on an
object are given, and that there are more grasping force components
than the number of degrees of freedom of the object. In such cases,
the grasping forces cannot be determined by the equations of mechanics
alone---there will be an infinite number of solutions to the equations.
Thus, it might be desirable, to find among all grasping forces that
satisfy the equations of mechanics, those forces that satisfy some
optimality conditions.

For example, a fragile object may be held at a number of points, so
that one may wish to minimize the largest applied grasping force---gentlest
grasp. In the general case, considered in Section \ref{sec:Optimization},
the optimality condition is given in terms of a norm on the space
grasping forces. The main result, given in Proposition \ref{prop:Optimal},
describes the optimum in terms of the applied external force and the
linear mapping, $\jac$, that gives the generalized velocities at
the points where the object is held in terms of the generalized velocities
of the object.

Once the result on optimal reactions is at hand, one may consider
the following problem. Conceivably, one could use a force controlled
grasping system in order to realize optimal grasping of an object
for any given external load. However, due to limitations of the force
control system and in order to prevent damage to the object, which
may be caused by grasping, the norm of admissible grasping forces
may be limited to some value $\yi$. The question then arises as to
the nature of the set of admissible external forces on the object.
That is, how can we characterize the external forces that may be supported
by grasping forces, the norms of which do not exceed $\yi$? A related
problem is concerned with worst case loading. How can one characterize
forces that pose the greatest risk to grasping?

These issues are considered in Section \ref{sec:Load_Capacity} and
the main result is given in Proposition \ref{prop:LoadCap}. Simply
put, there is a number $R>0$, the \textit{grasping system sensitivity,
}such that the admissible forces, $\fc$, are those satisfying the
condition 
\begin{equation}
\norm{\fc}\les\frac{\yi}{R}.
\end{equation}
The system's sensitivity, $R$, is determined by the linear mapping
$\jac$, mentioned above. In particular, $R$ depends only on the
geometry of the grasping.

The methods we use in this paper are analogous to those used in previous
work \cite{LoadCap07,Falach2009} in the context of continuum mechanics.
Section \ref{sec:Notation-and-Prliminaries} below overviews the framework
for the mechanics of grasping within which we work. Of special importance
is the natural assumption that the mapping $\jac$, relating the generalized
velocities of the object to the generalized velocities of the supported
points, is injective. Section \ref{sec:Optimization} presents the
optimization problem in terms of a norm on the space of grasping forces.
The main result of this section, Proposition \ref{prop:Optimal} is
based on the finite dimensional version of the Hahn-Banach theorem.
Bounded grasping forces and worst case loadings are studied in Section
\ref{sec:Load_Capacity}. The notion of sensitivity of the grasping
to a certain external force is defined, and is related to the \textit{factor
of safety} of the grasping. These objects are intimately related to
the Minkowski functional for a certain convex set, $K$, associated
with the admissible grasping forces. The main result, as described
above, is presented in Proposition \ref{prop:LoadCap}. Since one
of the methods of computing the optimal reactions is based on the
construction of the proof of the finite dimensional Helly-Hahn-Banach
theorem (Helly \cite{Helly1912} was the first to prove the finite-dimensional
version), and in order to make the manuscript self-contained, we overview
this construction in Appendix \ref{sec:H-B}.

In order to illustrate the approach and results presented in text,
we consider in Section \ref{sec:Optimization} and \ref{sec:Load_Capacity}
a number of very simple examples.

\section{The mechanical framework for the analysis\label{sec:Notation-and-Prliminaries}}

This section will present the mechanical and mathematical settings
for the grasping problem that we wish to optimize. It relies on the
geometric point of view of the mechanical system involved.

\subsection{Objects, configurations and constraints}

Our analysis is focused on an \emph{object}---a mechanical system
that is grasped by the robot. In most applications, this mechanical
system will be modeled as a rigid body. However, the following analysis
holds also when the grasped object is a mechanism. The kinematics
of the object is modeled by an $n$-dimensional differentiable manifold,
the \emph{configuration space}, $\csa$, where $n$ is the number
of degrees of freedom of the object. A typical configuration of the
object will be denoted by $\conf\in\csa$. The physical space will
be represented by $\rthree$.

The object is supported by constraints that determine its configurations.
As a typical example, the locations in space of a fixed number of
points belonging to the objects may be constrained by the robot grippers.
(See illustration in Figure \ref{fig:Constraints}.)

\begin{figure}
\begin{centering}
\includegraphics[scale=0.7]{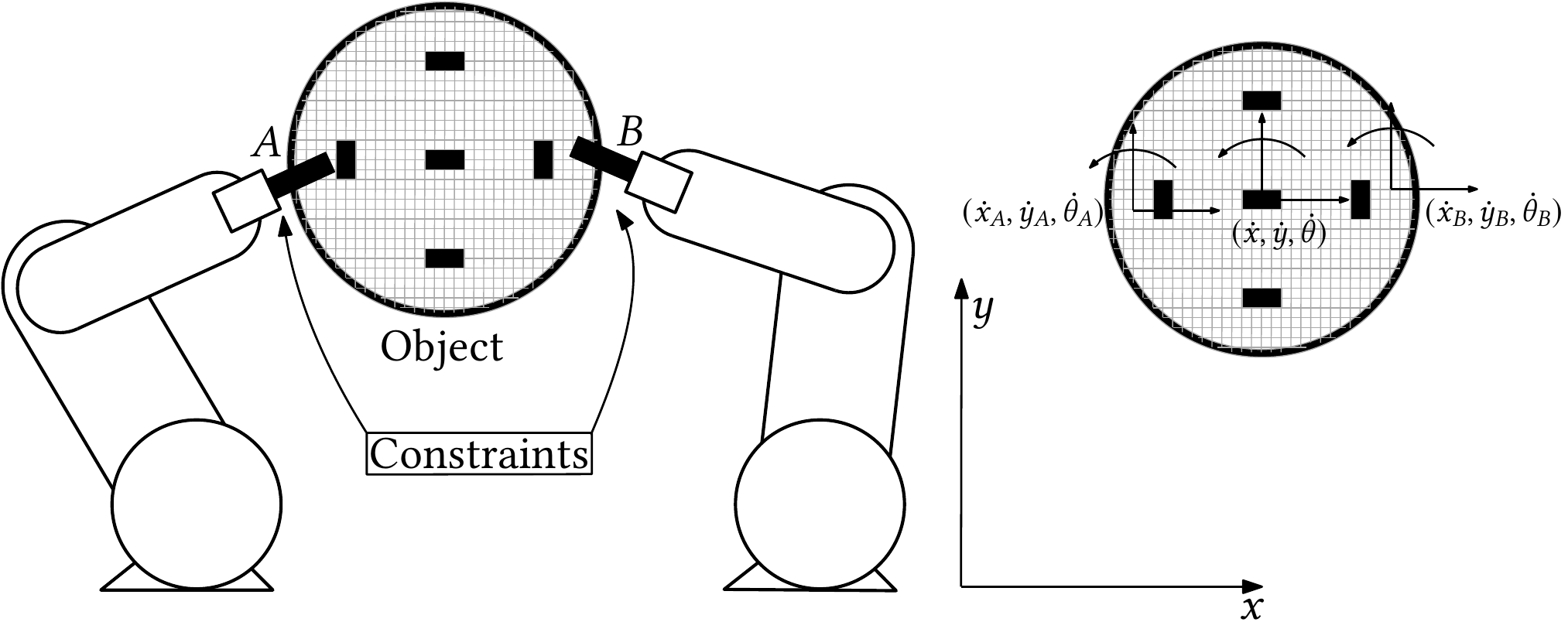}
\par\end{centering}
\caption{\label{fig:Constraints}Illustrating an object and constraints. In
this simple example, $\protect\csa=\protect\reals^{2}\times S^{1}=\{(x,y,\theta)\}$,
$\protect\csb=(\protect\reals^{2}\times S^{1})^{2}=\{(x_{A},y_{A},\theta_{A},x_{B},y_{B},\theta_{B})\}$,
$\protect\vs=\protect\reals^{3}=\{(\dot{x},\dot{y},\dot{\theta})\}$
and $\protect\avs=\protect\rsix=\{(\dot{x}_{A},\dot{y}_{A},\dot{\theta}_{A},\dot{x}_{B},\dot{y}_{B},\dot{\theta}_{B})\}$.}
\end{figure}
It is natural, therefore, to assume that a given configuration $\conf\in\csa$
determines the corresponding state of the constraints. For example,
if the object is a rigid body which is constrained by the locations
of $m_{c}$ points in it, any configuration of the object will determine
$m_{c}$ points in space, an element of $\reals^{3m_{c}}$. We will
denote a typical state of the constraints by $\cs$ and will denote
the collection of all states of the constraints---the \emph{constraint
space---}by $\csb$. Thus, for the example, $\csb=\reals^{3m_{c}}$.
In the general case, the constraint space $\csb$ is assumed to be
a differentiable manifold of dimension $m$. Since fixing the state
of the constraints should determine a unique state of the object,
it is assumed that $\dimension\csb=m\ges\dimension\csa=n$, so that
the constraints can eliminate all degrees of freedom the object might
have. Thus, the constraint state determined by a configuration of
the object is given through a mapping---the\emph{\noun{ }}\emph{constraint
mapping,}
\begin{equation}
\vph:\csa\tto\csb.
\end{equation}
It is assumed that the mapping $\fk$ is differentiable. Moreover,
the assumption that a state of the constraints determines a unique
configuration of the object, implies that the mapping $\vph$ is injective.
It is observed that the assumption that $\vph$ is injective, is different
from what one expects from the forward kinematics mapping in robotics,
where it is accepted that the inverse kinematics problem does not
have a unique solution.

We note that the dimension of the constraint space, $m$ may be strictly
greater than the dimension $n$ of the configurations space. In such
a case, the object is over-constrained by the grasping. If $\cs\in\image\fk$,
that is, there is some configuration $\conf\in\csa$ such that $\cs=\fk(\conf)$,
we will say that $\cs$ is a \emph{compatible constraint state} and
that $\cs$ \emph{is compatible with} $\conf$. For the case where
$m>n$, strictly, it is clear that not every constraint state is compatible.
\begin{example}
Consider the two-dimensional system considered in Figure \ref{fig:Constraints}.
Here, $\csa=\reals^{2}\times S^{1}=\{(x,y,\theta)\}$ and $\csb=(\reals^{2}\times S^{1})^{2}=\{(x_{A},y_{A},\theta_{A},x_{B},y_{B},\theta_{B})\}$.
It is noted that for a generic element of $\csb$, there need not
be any relation between $(x_{A},y_{A},\theta_{A})$ and $(x_{B},y_{B},\theta_{B})$.
Evidently, kinematics of rigid bodies provide the relation between
the components of compatible elements of $\csb$.
\end{example}

\subsection{Velocities, external forces and supporting forces}

Modeling the configuration space of the object as a differentiable
manifold offers a natural representation of \emph{generalized velocities}
at any configuration $\conf\in\csa$, as elements of the tangent vector
space $T_{\conf}\csa$ to the configuration manifold at the point
$\conf$. Similarly, elements of $T_{\cs}\csb$ are interpreted as
\emph{constraint }or \emph{support velocities}.

The differentiability assumption for the constraint mapping implies
that for any $\conf\in\csa$, we have a linear mapping---the tangent
mapping
\begin{equation}
T_{\conf}\vph:T_{\conf}\csa\tto T_{\fk(\conf)}\csb
\end{equation}
that assigns a constraint velocity to each generalized velocity of
the object. The mapping $T_{\conf}\fk$ is represented in coordinates
by the Jacobian of the mapping representing $\fk$ (as we describe
below). Henceforth, we strengthen the assumption that $\fk$ is injective
and assume further that the linear $T_{\conf}\fk$ is injective. The
property that $T_{\conf}\fk$ is injective will be referred to as
\emph{kinematic determinacy.}

In what follows, we will be concerned with the grasping problem for
one particular configuration of the object. Hence, to simplify the
notation we will use $\vs$ for $T_{\conf}\csa$, $\avs$ for $T_{\vph(\conf)}\csb$,
and $\jac$ for $T_{\conf}\fk$. Thus, the foregoing linear injective
mapping may be written as 
\begin{equation}
\jac:\vs\tto\avs.
\end{equation}

In analogy with the terminology for configurations, we say that a
generalized velocity of the constraints $\av$ is \emph{compatible
}if $\av\in\image\jac$. We say that $\av$ \emph{is compatible with
}the generalized velocity $\v\in\vs$ of the object if $\av=\jac(\v)$.

In general, we adopt here the geometric point of view of statics whereby
for a mechanical system having a configuration space $\mathcal{M}$,
a \emph{generalized force $\fc$ at the configuration $q\in\mathcal{M}$}
is defined as a linear functional---a \emph{covector}
\begin{equation}
\fc:T_{q}\mathcal{M}\tto\reals.
\end{equation}
Thus, generalized forces act linearly on generalized velocities to
produce real numbers. The action $f(v)$ of the force $\fc$ on the
generalized velocity $\v$, is interpreted as the power produced by
the force for the velocity $\v$. Thus, forces at the configuration
$q$ belong to the dual space
\begin{equation}
(T_{q}\mathcal{M})\du:=\{\fc:T_{q}\mathcal{M}\tto\reals\mid f\text{ is linear}\}.
\end{equation}

This applies, in particular, to forces acting on the object. Thus,
a generalized force on the object at a configuration $\conf\in\csa$
is an element $f\in(T_{\conf}\csa)\du=\vs\du$. Similarly, a generalized
force $\afc$ acting on the constraints (for example, due to forces
exerted by the object) at a constrain state $\cs\in\csb$ is an element
of the dual space $(T_{\cs}\csb)\du$. In case $\cs=\fk(\conf)$,
$(T_{\cs}\csb)\du=\avs\du$ and $\afc\in\avs\du$.

Given a force $\afc\in\avs\du$, it is natural to interpret its opposite,
$-\afc$, as the force exerted by the constraints on the object.

The mapping $\jac=T_{\conf}\csa$ induces the dual (adjoint) mapping
\begin{equation}
\jac\du:\avs\du\tto\vs\du,
\end{equation}
 whereby a generalized force $\afc$ on the supports determines an
external force $\fc=\jac\du(\afc)$ on the object, which is defined
by
\begin{equation}
\fc(\v)=\jac\du(\afc)(\v)=\afc(\jac(\v)),\qquad\text{or equivalently,}\qquad\fc=\jac\du(\afc)=\afc\comp\jac.
\end{equation}

Let $\fc\in\vs\du$ be a generalized force acting on the object and
let $\afc\in\avs\du$ be a generalized force acting on the supports.
We will say that $\afc$ is \emph{consistent with }$\fc$ if 
\begin{equation}
\fc=\jac\du(\afc).\label{eq:Eq_Eq}
\end{equation}
Thus, $\afc$ is consistent with $\fc$ if the virtual powers that
they produce are equal for pairs of compatible virtual velocities---the
principle of virtual work. This is equivalent to the statement that
$\fc$ is \emph{in equilibrium} with $-\afc$ and we will refer to
Equation (\ref{eq:Eq_Eq}) as the \emph{consistency condition} or
the \emph{equilibrium equation}.

It is noted that the foregoing geometric framework applies to the
supports and reactions of any fully constrained mechanism.

\subsection{Representation by components}

Let $\lisup{\conf},n$ be a coordinate system in a neighborhood of
$\conf_{0}$ in $\csa$ and let $\lisup{\cs},m$ be a coordinate system
in a neighborhood of $p_{0}=\fk(\conf_{0})$ in $\csb$. In particular,
$\conf_{0}$ is represented by the coordinates $\lisup{\conf_{0}},n$
and $\cs_{0}$ is represented by the coordinates $\lisup{\cs_{0}},m$.
Then, in a neighborhood of $\conf_{0}$, the constraint mapping may
be represented in the form
\begin{equation}
\cs^{i}=\cs^{i}(\lisup{\conf},n)=\cs^{i}(\conf^{\ga})=\fk^{i}(\conf^{\ga}),\qquad\ga=\oneto n,\,\,i=\oneto m.
\end{equation}
(Note that in the last equation we mix the notation for the real valued
functions $\fk^{i}$with the notation for the variables $\cs^{i}$.)
In particular, $\cs_{0}^{i}=\cs^{i}(\conf_{0}^{\ga})=\fk^{i}(\conf_{0}^{\ga})$.

The coordinate neighborhoods described above make it possible to specify
tangent vectors and virtual velocities by components. Thus, a generalized
velocity $\v\in\vs$ of the object will be represented by $(\lisup{\v},n)\in\reals^{n}$,
and a generalized velocity of the constraints, $\av\in\avs$, will
be represented by $(\lisup{\av},m)\in\reals^{m}$.

Using this coordinate representation, the tangent mapping, $\jac=T_{\conf_{0}}\fk$,
of the constraint mapping at $\conf_{0}$ is represented by Jacobian
matrix
\begin{equation}
\jac_{\ga}^{i}:=\parder[\conf^{\ga}]{\cs^{i}}(\conf_{0}^{\gb})=\parder[\conf^{\ga}]{\fk^{i}}(\conf_{0}^{\gb}),\qquad\ga,\gb=\oneto n,\,\,i=\oneto m.
\end{equation}
It follows that the condition that $\v\in\vs$ and $\av\in\avs$ are
compatible may be expressed in the form
\begin{equation}
\av^{i}=\jac_{\ga}^{i}\v^{\ga},\qquad i=\oneto n,\label{eq:vel_comp_comp}
\end{equation}
where the summation convention is used for the repeated index $\ga=\oneto n$.

As an element in the dual space, a force $\fc$ is represented by
$(\lisub{\fc},n)\in\reals^{n}$ and its action on a generalized velocity
$\v\in\vs$ is given by
\begin{equation}
\fc(\v)=\fc_{\ga}\v^{\ga}.
\end{equation}
Similarly, $\afc\in\avs\du$ is represented by an element $(\lisub{\afc},m)\in\reals^{m}$
and its action on a generalized velocity of the constraints $\av$
is given by 
\begin{equation}
\afc(\av)=\afc_{i}\av^{i}.
\end{equation}

Finally the consistency condition for a force $\fc\in\vs\du$ and
a generalized force on the constraints $\afc\in\avs\du$ is
\begin{equation}
\fc_{\ga}\v^{\ga}=\afc_{i}\jac_{\ga}^{i}\v^{\ga}
\end{equation}
for every vector $\v\in\vs$. As expected, the resulting consistency
(or equilibrium) condition is
\begin{equation}
\fc_{\ga}=\afc_{i}\jac_{\ga}^{i},\qquad\ga=\oneto n.\label{eq:Equil_comp}
\end{equation}

\section{\label{sec:Optimization}Static indeterminacy and optimization}

As mentioned above, $m=\dimension\avs\ges n=\dimension\vs$. Since
the dimension of a dual space is equal to the dimension of the primal
space $\dimension\avs\du\ges\dimension\vs\du$. This implies that
the consistency condition, as represented in Equation (\ref{eq:Equil_comp}),
may be viewed, for a given $\fc\in\vs\du$, as a system of linear
$n$ equations with $m\ges n$ unknowns. Thus, for the case where
$m>n$, the force $\fc$ cannot determine a unique constraint force
$\afc\in\avs\du$. This is the origin of what is usually referred
to as \emph{static indeterminacy}. It is noted that static indeterminacy
is a result of the kinematic determinacy with $m>n$, strictly.

The foregoing observation implies that for a given force $\fc\in\vs\du$
on the object, there is a collection 
\begin{equation}
\jac^{*-1}\{\fc\}=\{\afc\in\avs\du\mid\jac\du(\afc)=\fc\}
\end{equation}
of forces on the constraints with which $\fc$ is in equilibrium.
It is meaningful, therefore, to look for an element in the collection
$\jac^{*-1}\{\fc\}$, which is optimal in some sense.

In this paper, the optimization criterion, the value which one wishes
to minimize, is assumed to be represented by a norm on the space of
constraint forces $\avs\du$. Thus, denoting the norm of the constraint
force $\afc\in\avs\du$ by $\norm{\afc}$, the properties of norm
require that $\norm{\afc}\ges0$, $\norm{a\afc}=\abs a\norm{\afc}$,
$\norm{\afc_{1}+\afc_{2}}\les\norm{\afc_{1}}+\norm{\afc_{2}}$, for
all $\afc,\afc_{1},\afc_{2}\in\avs\du$, $a\in\reals$, and that $\norm g=0$
if and only if $\afc=0$.

Thus, the grasping optimization problem may be formulated as follows.
Given $\jac:\vs\to\avs$, and the force $\fc\in\vs\du$ acting on
the object, find
\begin{equation}
\opt_{\fc}:=\inf\{\norm{\afc}\mid\afc\in\avs\du,\,\jac\du(\afc)=\fc\}.
\end{equation}

We show below some properties of the optimum $\opt_{\fc}$ below and
in particular, we show the existence of a minimizer $\afc_{0}$ (not
necessarily unique) such that $\opt_{\fc}=\norm{\afc_{0}}$. For this
purpose and in order to make the text self-contained, we review below
some properties of normed vector spaces.

\subsection{Primal norms and dual norms}

We consider a generic vector space $\vs$ with a norm $\norm{\cdot}$.
(The definitions and results will be applied to both $\vs$ and $\avs$
above.) Since this is our starting point, we will refer to $\vs$
as the \emph{primal vector space }and to the prescribed norm as the
\emph{primal norm}.

The norm on the vector space $\vs$ induces naturally a norm, the
\emph{dual norm} on the dual space $\vs\du$. The dual norm of an
element $\fc\in\vs\du$ is defined by 
\begin{equation}
\norm{\fc}=\sup_{\v\neq0}\frac{\abs{\fc(\v)}}{\norm{\v}}=\sup_{\norm{\v}=1}\abs{\fc(\v)}.\label{eq:dual_norm}
\end{equation}
Note that we use the same notation for both primal and dual norms
when no confusion may arise. 

It is recalled that for the finite-dimensional case $\dimension\vs\du=\dimension\vs$
and for any $v\in\vs$
\begin{equation}
\norm v=\sup_{\fc\ne0}\frac{\abs{\fc(v)}}{\norm{\fc}}=\sup_{\norm{\fc}=1}\abs{\fc(v)}.\label{eq:primal_norm}
\end{equation}

For the optimization problem considered, it is natural to assume that
the norm for the constraint forces, which one wishes to minimize,
is chosen on the basis of considerations related to the design of
the gripping mechanism. This implies that in for the optimization
problem, one starts with the dual norm in $\avs\du$. Thus, Equation
(\ref{eq:primal_norm}) provides the corresponding primal norm on
$\avs$.

\subsection{Extension of covectors and static indeterminacy}

Consider the following diagram\begin{equation}\label{eq:diagram}
\begin{xy}
(40,25)*+{\image J} ="TS";
(-10,25)*+{\reals} = "TQ";
(15,25)*+{\vs} ="jTS";
(65,25)*+{\avs}="U";
{\ar@{<-}@/^{2pc}/^{\afc_0=\fc\comp \wh J^{-1}} "TQ"; "TS"};
{\ar@{<-}^{\fc} "TQ"; "jTS"};
{\ar@{<-}^{\wh\jac^{-1}} "jTS"; "TS"};
{\ar@{->}@/_{1pc}/_{\wh\jac} "jTS"; "TS"};
{\ar@{->}^{\incl} "TS"; "U"};
\end{xy}
\end{equation}where 
\begin{equation}
\wh{\jac}:\vs\to\image\jac,\qquad\text{and}\qquad\incl:\image\jac\to\avs\label{eq:def_J-hat}
\end{equation}
are the mapping between $\vs$ and its image under $\jac$, and the
natural inclusion of the vector subspace, so that $\jac=\incl\comp\wh{\jac}$.
Since $\jac$ is injective, $\wh{\jac}:\vs\to\image\jac$ is an isomorphism,
and $\wh J^{-1}:\image\jac\to\vs$ is a well-defined isomorphism.
A generalized force, $\fc\in\vs\du$, acting on the object, induces
a unique element $\afc_{0}:=\fc\comp\wh{\jac}^{-1}\in(\image\jac)\du$.
Conversely, any $\afc_{0}\in(\image\jac)\du$ determined a unique
$\fc\in\vs\du$ by $\fc=\afc_{0}\comp\wh{\jac}$, or $\fc(v)=\afc_{0}(\jac(v))$,
for all $v\in\vs$. In other words, 
\begin{equation}
\wh J^{*}:(\image\jac)\du\tto\vs\du
\end{equation}
is an isomorphism.

Note however that for a given $\fc\in\vs\du$, $\afc_{0}=\wh{\jac}\du(\fc)$
is not a constraint force. It is an element of $(\image\jac)\du$
and not an element of $\avs\du$. In other words, we know how $\afc_{0}$
acts only on all compatible velocities of the constraints and not
on all generalized constraint velocities. While a constraint force
$\afc\in\avs\du$ may be restricted uniquely to an element $\afc_{0}\in(\image\jac)\du$
by $\afc_{0}(u)=\afc(u)$, for the situation where $m>n$, the extension
of $\afc_{0}$ to the whole of $\avs$ is not unique. This observation
is another aspect of static indeterminacy. It follows that our optimization
problem is equivalent to minimizing the extension $\afc\in\avs\du$
of $\afc_{0}=\wh{\jac}\du(\fc)\in(\image\jac)\du$ from $\image\jac$
to $\avs$.

\subsection{\label{subsec:HB-consequences}Norm-minimizing extensions and the
Hahn-Banach theorem}

In view of the conclusion of the preceding paragraph, we consider
norm-minimizing extensions of covectors. Thus, consider $\svs=\image\jac\subset\avs$.
(The same construction applies to any other vector subspace $\svs\subset\avs$.)
It is assumed that dual norms are given as above on $\avs$ and $\avs\du$.
The norm on $\avs$ induces, by restriction, a norm on $\svs$. In
turn, a norm on $\svs\du$ is induced by setting
\begin{equation}
\norm{\afc_{0}}=\sup_{\vf\neq0}\frac{\abs{\afc_{0}(\vf)}}{\norm{\vf}}=\sup_{\norm{\vf}=1}\abs{\afc_{0}(\vf)},\qquad\vf\in\svs.
\end{equation}

It is noted that in case $\afc_{0}\in\svs\du$ is the restriction
of some constraint force $\afc\in\avs\du$, then,
\begin{equation}
\norm{\afc_{0}}=\sup_{\substack{\vf\neq0\\
\vf\in\svs
}
}\frac{\abs{\afc_{0}(\vf)}}{\norm{\vf}}=\sup_{\substack{\vf\neq0\\
\vf\in\svs
}
}\frac{\abs{\afc(\vf)}}{\norm{\vf}}\les\sup_{\substack{\avf\neq0\\
\avf\in\avs
}
}\frac{\abs{\afc(\avf)}}{\norm{\avf}}=\norm{\afc}.
\end{equation}
Thus, the norm of a constraint force is bounded from below by the
norm of its restriction to compatible velocities of the constraints.
The isomorphism $\wh{\jac}\du$ implies that the optimum $\opt_{\fc}$
is bounded below by the norm of $\afc_{0}=\wh{\jac}\du(\fc)$.

The Hahn-Banach theorem, in fact, the theorem proved by Helly for
the finite dimensional case in \cite{Helly1912}, asserts that any
$\afc_{0}\in\svs\du$ may be extended to some $\afc_{\textrm{H}}\in\avs\du$
without increasings its norm. The proof of the theorem is given in
Appendix \ref{sec:H-B} both for the sake of completeness, and, in
particular, because we will use the construction in the following
examples.

It is concluded that the lower bound $\norm{\afc_{0}}$ on $\norm{\afc}$
is attained by some $\afc_{\mathrm{H}}\in\avs\du$ and the optimal
extension is given by 
\begin{equation}
\norm{\afc_{0}}=\inf\{\norm{\afc}\mid\afc\in\avs\du,\afc(\vf)=\afc_{0}(\vf),\,\forall\vf\in\svs\}=\norm{\afc_{\mathrm{H}}}.
\end{equation}

\subsection{\label{subsec:Optimum}Norm minimizing grasping}

Once optimal extensions are at our disposal, we may continue our study
of optimal grasping.

Let a norm be given on the vector space $\vs:=T_{\conf}\csa$ of generalized
velocities of the object.
\begin{prop}
\label{prop:Optimal}Using the notation introduced above, for a fixed
configuration of the object, $\conf\in\csa$, let 
\begin{equation}
\jac:=T_{\conf}\fk:\vs:=T_{\conf}\csa\tto\avs:=T_{\fk(\conf)}\csb,
\end{equation}
be the tangent of the constraint mapping.
\begin{enumerate}
\item Assume that $\jac$ is injective. Then, given a force $\fc\in\vs\du$
on the object there exists some $\afc\in\avs\du$ that satisfy the
equilibrium equations, that is
\[
\fc=\jac\du(\afc).
\]
\item Assume that $\vs\du$ and $\avs\du$ are normed and for a given $\fc\in\vs\du$
let the optimal grasping problem be defined as
\begin{equation}
\opt_{\fc}:=\inf\{\norm{\afc}\mid\afc\in\avs\du,\,\jac\du(\afc)=\fc\}.
\end{equation}
Then, 
\begin{equation}
\opt_{\fc}=\sup_{v\in\vs}\frac{\fc(v)}{\norm{\jac(v)}},\label{eq:Optimal_Reactions}
\end{equation}
where the norms on  $\avs$ is the primal norm for that given on
 $\avs\du$. In particular, for the case where $\vs$ is a normed
space, $\vs\du$ is equipped with the dual norm, and $\jac$ is norm
preserving,
\begin{equation}
\opt_{\fc}=\norm{\fc}.
\end{equation}
\item The optimal grasping is attained for some $\afc_{\mathrm{H}}\in\avs\du$,
that is, there is some $\afc_{\mathrm{H}}\in\avs\du$, such that
\begin{equation}
\opt_{\fc}=\norm{\afc_{\mathrm{H}}}.
\end{equation}
\item The optimal grasping force is positive homogeneous, that is, for every
$a\in\reals$, 
\begin{equation}
\opt_{a\fc}=\abs a\opt_{\fc}.\label{eq:Optimum_homogeneous}
\end{equation}
In particular,
\begin{equation}
\opt_{\fc/\opt_{\fc}}=1.
\end{equation}
\end{enumerate}
\end{prop}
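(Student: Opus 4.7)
My plan is to reduce the entire statement to the extension-of-covectors viewpoint already developed via diagram (\ref{eq:diagram}) and Subsection \ref{subsec:HB-consequences}. Since $\jac$ is injective, $\wh{\jac}\from\vs\to\image\jac$ is an isomorphism, so given any $\fc\in\vs\du$ the covector $\afc_{0}:=\fc\comp\wh{\jac}^{-1}\in(\image\jac)\du$ is well defined. The preimage $\jac^{*-1}\{\fc\}$ is then precisely the set of linear extensions of $\afc_{0}$ from $\image\jac$ to all of $\avs$: for any $\afc\in\avs\du$ we have $\jac\du(\afc)=\afc\comp\incl\comp\wh{\jac}$, so $\jac\du(\afc)=\fc$ if and only if the restriction of $\afc$ to $\image\jac$ equals $\afc_{0}$.

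For Part~1, I would exhibit an extension by choosing any vector-space complement $\svs'$ of $\image\jac$ in $\avs$ and defining $\afc$ to agree with $\afc_{0}$ on $\image\jac$ and to vanish on $\svs'$. For Part~3, I would upgrade this arbitrary extension to a norm-preserving one by invoking the finite-dimensional Hahn--Banach (Helly) theorem reviewed in Appendix \ref{sec:H-B}: this produces $\afc_{\mathrm{H}}\in\avs\du$ with $\norm{\afc_{\mathrm{H}}}=\norm{\afc_{0}}$. Combined with the lower bound $\norm{\afc_{0}}\les\norm{\afc}$ valid for every extension $\afc$, already derived in Subsection \ref{subsec:HB-consequences}, this shows that the infimum $\opt_{\fc}$ is attained and equals $\norm{\afc_{0}}$.

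For Part~2 I would compute $\norm{\afc_{0}}$ directly from the definition of the dual norm on $(\image\jac)\du$,
\begin{equation}
\norm{\afc_{0}}=\sup_{\vf\in\image\jac,\,\vf\neq0}\frac{\abs{\afc_{0}(\vf)}}{\norm{\vf}}.
\end{equation}
Parameterizing $\vf=\jac(v)$ with $v\in\vs$ and using $\afc_{0}(\jac(v))=\fc(v)$ yields $\norm{\afc_{0}}=\sup_{v\neq0}\abs{\fc(v)}/\norm{\jac(v)}$; the absolute value on the right may be dropped because the supremum is unchanged when $v$ is replaced by $-v$, which gives (\ref{eq:Optimal_Reactions}). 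When $\jac$ is norm preserving, $\norm{\jac(v)}=\norm v$, and the right-hand side becomes the dual norm $\norm{\fc}$. Part~4 then follows by rescaling: for $a\neq0$ the map $\afc\mapsto a\afc$ is a bijection from $\jac^{*-1}\{\fc\}$ onto $\jac^{*-1}\{a\fc\}$, and the absolute homogeneity of the norm gives $\opt_{a\fc}=\abs a\,\opt_{\fc}$; the case $a=0$ is trivial.

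The main conceptual step is the first one: recognizing that, through the factorization $\jac=\incl\comp\wh{\jac}$ together with the isomorphism $\wh{\jac}$, the optimization problem is exactly a norm-preserving extension problem on a subspace, to which Helly's finite-dimensional Hahn--Banach theorem applies. Once this identification is made, the rest amounts to bookkeeping, with the only subtlety being to track carefully whether a supremum is taken over $\vf\in\image\jac$ or over $v\in\vs$.
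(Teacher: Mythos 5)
Your proposal is correct and follows essentially the same route as the paper: factor $\jac=\incl\comp\wh{\jac}$, identify $\jac^{*-1}\{\fc\}$ with the extensions of $\afc_{0}=\fc\comp\wh{\jac}^{-1}$, compute $\norm{\afc_{0}}$ by parameterizing $\image\jac$ via $\wh{\jac}$, and invoke the Helly--Hahn--Banach extension to attain the infimum. Your handling of the absolute value and of Part~4 via rescaling of the feasible set is slightly more explicit than the paper's, but the argument is the same.
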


\begin{proof}
Using the foregoing notation and observations we proceed as follows.

Let $\wh{\jac}:\vs\to\image\jac$ be the restricted isomorphism. Then,
$\afc_{0}\in(\image\jac)\du$, given by $\afc_{0}=\fc\comp\wh{\jac}^{-1}$
satisfies 
\begin{equation}
f=\afc_{0}\comp\wh{\jac}=\jac\du(\afc_{0}),\qquad\text{or explicitly,}\qquad f(v)=\afc_{0}(\jac(v))\fall v\in\vs.
\end{equation}
The covector $\afc_{0}\in(\image\jac)\du$ may be extended to some
$\afc\in\avs^{*}$ satisfying $\afc(\vf)=\afc_{0}(\vf)$ for all $\vf\in\image\jac$.
Hence,
\begin{equation}
\fc(v)=\afc(\jac(v))\fall v\in\vs
\end{equation}
and we conclude that $\fc=\jac\du(\afc)$. This proves the first assertion
above.

By definition,
\begin{equation}
\norm{\afc_{0}}:=\sup_{\vf\in\image\jac}\frac{\afc_{0}(\vf)}{\norm{\vf}}.
\end{equation}
Since $\wh{\jac}$ is an isomorphism, we may write
\begin{equation}
\begin{split}\norm{\afc_{0}} & =\sup_{v\in\vs}\frac{\afc_{0}(\jac(v))}{\norm{\jac(v)}},\\
 & =\sup_{v\in\vs}\frac{\fc(v)}{\norm{\jac(v)}},
\end{split}
\end{equation}
where it is observed that $\norm{\afc_{0}}$ is uniquely determined
by $\fc$. As a consequence of the Hahn-Banach theorem as discussed
in Section \ref{subsec:HB-consequences}, there is some $\afc_{\mathrm{H}}\in\avs\du$
extending $\afc_{0}$ such that $\norm{\afc_{\mathrm{H}}}=\norm{\afc_{0}}$---the
minimal norm of all possible extensions. Thus, 
\begin{equation}
\opt_{\fc}=\norm{\afc_{\mathrm{H}}}=\sup_{v\in\vs}\frac{\fc(v)}{\norm{\jac(v)}},
\end{equation}
which proves (2) and (3) above.

Evidently, if $\jac$ is norm preserving so that $\norm{\jac(v)}=\norm v$,
one has
\begin{equation}
\opt_{\fc}=\sup_{v\in\vs}\frac{\fc(v)}{\norm v}=\norm{\fc}.
\end{equation}

Assertion (4) follows immediately from Equation (\ref{eq:Optimal_Reactions}).
\end{proof}
Given a force $\fc$ acting on the object, the \textit{sensitivity
ratio for $\fc$, $R_{\fc}$, }is defined by
\begin{equation}
R_{\fc}:=\inf_{\substack{\fc=\jac\du(\afc)\\
\afc\in\avs\du
}
}\frac{\norm{\afc}}{\norm{\fc}}.
\end{equation}
It immediately follows from the foregoing result that 
\begin{equation}
R_{\fc}=\frac{\opt_{\fc}}{\norm{\fc}}=\frac{1}{\norm{\fc}}\sup_{v\in\vs}\frac{\fc(v)}{\norm{\jac(v)}}.\label{eq:sensitivity_ratio}
\end{equation}
From the practical point of view, $R_{\fc}$ indicates how much the
magnitude of $\fc$ is amplified for optimal grasping forces.
\begin{rem}
\label{rem:Rf_norm-indep}Equation (\ref{eq:Optimum_homogeneous})
implies that the sensitivity ratio for $\fc$ does not depend on $\norm{\fc}$.
It depends only on its normalized counterpart, $\fc/\norm{\fc}$,
that is, for any $a\in\reals^{+}$, 
\begin{equation}
R_{a\fc}=R_{\fc}=R_{\fc/\norm{\fc}}.
\end{equation}
\end{rem}

It is natural to refer to $1/R_{\fc}$ as the \textit{attenuation
of the force $\fc$.} One has,
\begin{equation}
\begin{split}\frac{1}{R_{\fc}} & =\frac{\norm{\fc}}{\opt_{\fc}},\\
 & =\frac{\norm{\fc}}{\sup_{v\in\vs}\{\fc(v)/\norm{\jac(v)}\}},\\
 & =\norm{\fc}\inf_{v\in\vs}\frac{\norm{\jac(v)}}{\fc(v)}.
\end{split}
\label{eq:attenuation}
\end{equation}

\subsection{\label{subsec:Example:wafer}Example: grasping a 2-dimensional object
at two points}

Figure \ref{fig:wafer1} illustrates two robotic hands holding a silicon
wafer at two points at the ends of a vertical diameter. The offset
of the weight vector from the middle of the vertical diameter is denoted
by $a$. In a 2-dimensional analysis, it is assumed that each gripper
can exert two force components in the plane. Thus, using the generalized
coordinates $(x,y,\theta)$ we may identify the configuration space
$\csa$ with $\reals^{2}\times S^{1}$ and the configuration shown
is $\conf=(0,0,0)$. The configuration space of the constrained points
is $\csb=\reals^{4}=\{(x_{1},y_{1},x_{2},y_{2})\}$ (see Figure \ref{fig:wafer1}).

\begin{figure}
\begin{centering}
\includegraphics[scale=0.6]{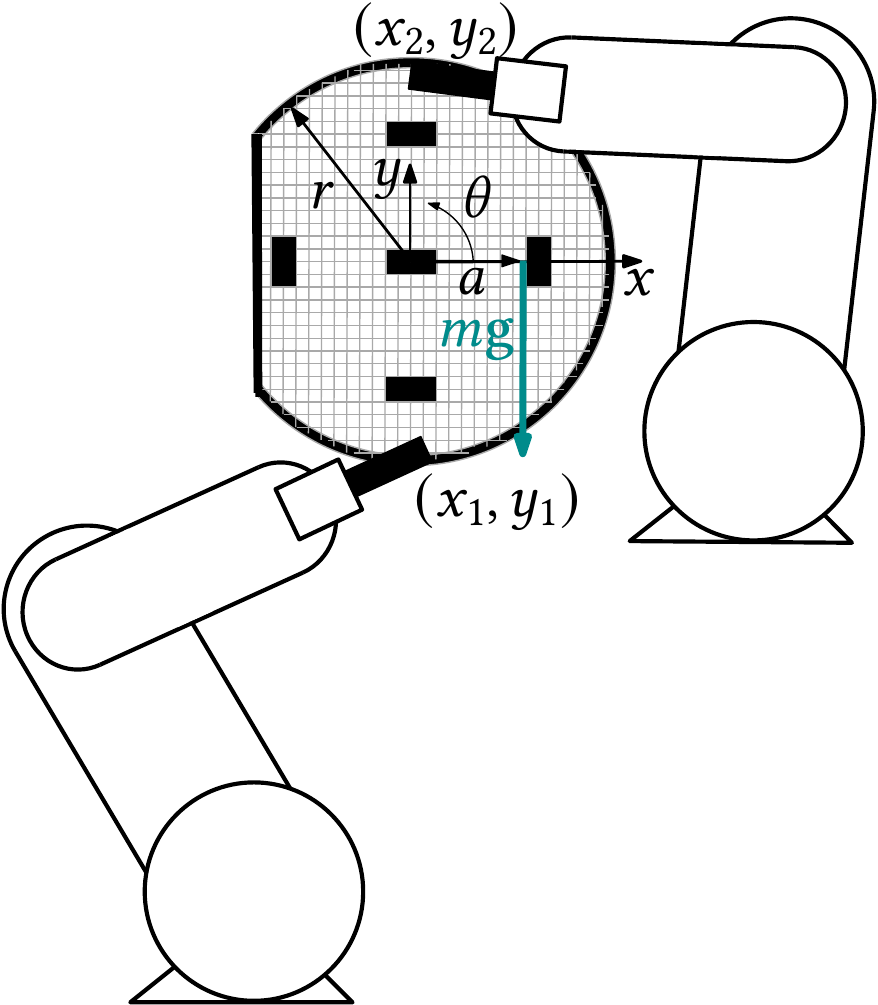}
\par\end{centering}
\caption{\label{fig:wafer1}Grasping a silicon wafer at two points}
\end{figure}
The constraint mapping $\fk:\csa\to\csb$ may be written explicitly
as
\begin{equation}
\fk(x,y,\theta)=\left(\begin{gathered}x+r\sin\theta\\
y+r(1-\cos\theta)\\
x-r\sin\theta\\
y-r(1-\cos\theta)
\end{gathered}
\right),
\end{equation}
and differentiating at the configuration $(0,0,0)$ gives
\begin{equation}
\jac=T_{\conf}\fk=\left(\begin{array}{ccc}
1 & 0 & r\\
0 & 1 & 0\\
1 & 0 & -r\\
0 & 1 & 0
\end{array}\right),\qquad\jac(v_{x},v_{y},\omega)=\left(\begin{gathered}v_{x}+r\omega\\
v_{y}\\
v_{x}-r\omega\\
v_{y}
\end{gathered}
\right),\qquad\jac\du=\left(\begin{array}{cccc}
1 & 0 & 1 & 0\\
0 & 1 & 0 & 1\\
r & 0 & -r & 0
\end{array}\right),\label{eq:jac1}
\end{equation}
where $(v_{x},v_{y},\omega)\in\reals^{3}$ represent an element in
$T_{\conf}\csa$ and the dual mapping is obtained by transposition.

The constraint force vector is of the form $\afc=(\afc_{x_{1}},\afc_{y_{1}},\afc_{x_{2}},\afc_{y_{2}})$,
and we consider first the sup norm so that
\begin{equation}
\norm{\afc}=\max\{\afc_{x_{1}},\afc_{y_{1}},\afc_{x_{2}},\afc_{y_{2}}\}.
\end{equation}
It follows that the primal norm on $\avs:=T_{\fk(\conf)}\csb\isom\reals^{4}$
is 
\begin{equation}
\norm{\avf}=\abs{\avf_{x_{1}}}+\abs{\avf_{y_{1}}}+\abs{\avf_{x_{2}}}+\abs{\avf_{y_{2}}}.
\end{equation}

\subsubsection{The case $a=0$, no offset}

For simplicity, we consider first the case where $a=0$ so that the
weight acts in the center. By Equation (\ref{eq:Optimal_Reactions}),
the optimum is given by

\begin{gather}
\opt_{\fc}=\sup_{\v\in\vs}\frac{\fc(\v)}{\norm{\jac(\v)}}=\sup_{\v\in\vs}\frac{-m\g v_{y}}{\abs{v_{x}+r\omega}+\abs{v_{y}}+\abs{v_{x}-r\omega}+\abs{v_{y}}}.
\end{gather}
Let $\ga,\gb\in\reals$ satisfy $v_{x}=\ga v_{y}$ and $r\go=\gb v_{y}$,
the optimum may be rewritten as
\begin{equation}
\opt_{\fc}=\sup_{\alpha,\beta}\frac{m\g}{\abs{\alpha+\beta}+\abs{\alpha-\beta}+2}=\frac{m\g}{2}.
\end{equation}

Next, we construct the optimal grasping forces having this norm. The
covector $\afc_{0}=\fc\comp\wh{\jac}^{-1}\in(\image\jac)\du$ satisfies,
by Equation (\ref{eq:jac1}),
\begin{equation}
\afc_{0}(\avf)=\fc\comp\wh{\jac}^{-1}(\avf_{x_{1}},\avf_{y_{1}},\avf_{x_{2}},\avf_{y_{2}})=-m\g v_{y}=-m\g\avf_{y_{1}}.
\end{equation}
In order to extend $\afc_{0}$ to $\afc_{\mathrm{H}}$ as in Proposition
\ref{prop:Optimal}, we use the Helly construction of the Hahn-Banach
theorem, as in Appendix \ref{sec:H-B}. As a basis for $\image\jac$
we use
\begin{equation}
\left\{ \left(\begin{gathered}0\\
1\\
0\\
1
\end{gathered}
\right),\left(\begin{gathered}1\\
0\\
1\\
0
\end{gathered}
\right),\left(\begin{gathered}1\\
0\\
-1\\
0
\end{gathered}
\right)\right\} ,
\end{equation}
and complete it to a basis of $\avs$ by adding the vector $(0,0,0,1)\in\reals^{4}$.

By Equation (\ref{eq:sup-inf H-B}), we calculate
\begin{equation}
\sup_{\image\jac}\left\{ \afc_{0}(\avf_{x_{1}},\avf_{y_{1}},\avf_{x_{2}},\avf_{y_{2}})-\norm{\afc_{0}}\norm{(\avf_{x_{1}},\avf_{y_{1}},\avf_{x_{2}},\avf_{y_{2}})-(0,0,0,1)}\right\} 
\end{equation}
and
\begin{equation}
\inf_{\image\jac}\left\{ \norm{\afc_{0}}\left\Vert (\avf,\avf_{y_{1}},\avf_{x_{2}},\avf_{y_{2}})+(0,0,0,1)\right\Vert -\afc_{0}(\avf_{x_{1}},\avf_{y_{1}},\avf_{x_{2}},\avf_{y_{2}})\right\} 
\end{equation}
and we should set $\afc_{H}(0,0,0,1)$ to be between those values.
\begin{multline}
\sup_{\image\jac}\left\{ \afc_{0}\left(\begin{gathered}\avf_{x_{1}}\\
\avf_{y_{1}}\\
\avf_{x_{2}}\\
\avf_{y_{2}}
\end{gathered}
\right)-\norm{\afc_{0}}\left\Vert \left(\begin{gathered}\avf_{x_{1}}\\
\avf_{y_{1}}\\
\avf_{x_{2}}\\
\avf_{y_{2}}
\end{gathered}
\right)-\left(\begin{gathered}0\\
0\\
0\\
1
\end{gathered}
\right)\right\Vert \right\} \\
\begin{aligned} & =\sup\left\{ -m\g\avf_{y_{1}}-\frac{m\g}{2}\left(\abs{\avf_{x_{1}}}+\abs{\avf_{y_{1}}}+\abs{\avf_{x_{2}}}+\abs{\avf_{y_{2}}-1}\right)\right\} ,\\
 & =m\g\sup\left\{ -\avf_{y_{1}}-\frac{1}{2}\abs{\avf_{x_{1}}}-\frac{1}{2}\abs{\avf_{y_{1}}}-\frac{1}{2}\abs{\avf_{x_{2}}}-\frac{1}{2}\abs{\avf_{y_{2}}-1}\right\} ,\\
 & =m\g\sup\left\{ -\avf_{y_{1}}-\frac{1}{2}\abs{\avf_{y_{1}}}-\frac{1}{2}\abs{\avf_{y_{1}}-1}\right\} ,\\
 & =-\frac{m\g}{2},
\end{aligned}
\end{multline}
where in the second line we used $\norm{\afc_{0}}=\opt_{\fc}=m\g/2$,
in the third line we omitted non-positive term and used $\avf_{y_{1}}=\avf_{y_{2}}$
in $\image\jac$. The last line is obtained by considering all possible
values of $\avf_{y_{1}}$.

Similarly,
\begin{multline}
\inf_{\image\jac}\left\{ \norm{\afc_{0}}\left\Vert \left(\begin{gathered}\avf_{x_{1}}\\
\avf_{y_{1}}\\
\avf_{x_{2}}\\
\avf_{y_{2}}
\end{gathered}
\right)+\left(\begin{gathered}0\\
0\\
0\\
1
\end{gathered}
\right)\right\Vert -\afc_{0}\left(\begin{gathered}\avf_{x_{1}}\\
\avf_{y_{1}}\\
\avf_{x_{2}}\\
\avf_{y_{2}}
\end{gathered}
\right)\right\} \\
\begin{aligned} & =\inf\left\{ \frac{m\g}{2}\left(\abs{\avf_{x_{1}}}+\abs{\avf_{y_{1}}}+\abs{\avf_{x_{2}}}+\abs{\avf_{y_{2}}+1}\right)+m\g\avf_{y_{1}}\right\} ,\\
 & =m\g\inf\left\{ \frac{1}{2}\left(\abs{\avf_{x_{1}}}+\abs{\avf_{y_{1}}}+\abs{\avf_{x_{2}}}+\abs{\avf_{y_{2}}+1}\right)+\avf_{y_{1}}\right\} ,\\
 & =m\g\inf\left\{ \frac{1}{2}\left(\abs{\avf_{y_{1}}}+\abs{\avf_{y_{1}}+1}\right)+\avf_{y_{1}}\right\} ,\\
 & =-\frac{m\g}{2}.
\end{aligned}
\end{multline}

Therefore, we must choose
\begin{equation}
\afc_{\mathrm{H}}(0,0,0,1)=-\frac{m\g}{2}.
\end{equation}
Finally, we may compute $\afc_{\mathrm{H}}$ by applying it to each
base vector to obtain
\begin{equation}
\begin{split}\afc_{\mathrm{H}}\left(\begin{gathered}\avf_{x_{1}}\\
\avf_{y_{1}}\\
\avf_{x_{2}}\\
\avf_{y_{2}}
\end{gathered}
\right) & =\avf_{y_{1}}\afc_{\mathrm{H}}\left(\begin{gathered}0\\
1\\
0\\
1
\end{gathered}
\right)+\frac{\avf_{x_{2}}+\avf_{x_{1}}}{2}\afc_{\mathrm{H}}\left(\begin{gathered}1\\
0\\
1\\
0
\end{gathered}
\right)+\frac{\avf_{x_{1}}-\avf_{x_{2}}}{2}\afc_{\mathrm{H}}\left(\begin{gathered}1\\
0\\
-1\\
0
\end{gathered}
\right)+(\avf_{y_{2}}-\avf_{y_{1}})\afc_{\mathrm{H}}\left(\begin{gathered}0\\
0\\
0\\
1
\end{gathered}
\right),\\
 & =-m\g\avf_{y_{1}}-(\avf_{y_{2}}-\avf_{y_{1}})\frac{m\g}{2},\\
 & =-\frac{m\g}{2}\avf_{y_{1}}-\frac{m\g}{2}\avf_{y_{2}},
\end{split}
\end{equation}
so that 
\begin{equation}
\afc_{\mathrm{H}}=(\begin{array}{cccc}
0 & -m\g/2 & 0 & -m\g/2\end{array}).
\end{equation}

Thus, the general procedure proposed yields indeed the expected result.

\subsubsection{The case $a\protect\ne0$, eccentric loading}

Next, we consider the eccentric situation where $a\ne0$ and $a\les r$
as illustrated in Figure \ref{fig:wafer1}. Evidently, the generalized
external force on the object is $\fc=(0,-m\g,-m\g a)$. Using the
same norms as above, Equation (\ref{eq:Optimal_Reactions}) implies
that 
\begin{equation}
\opt_{\fc}=\sup_{\v\in\vs}\frac{\fc(\v)}{\norm{\jac(\v)}}=\sup_{\v\in\vs}\frac{-m\g v_{y}-am\g\omega}{\abs{v_{x}+r\omega}+\abs{v_{y}}+\abs{v_{x}-r\omega}+\abs{v_{y}}}.
\end{equation}
Once again, we define $\ga,\gb\in\reals$, by $v_{x}=\alpha v_{y},\,\omega=\beta v_{y}$,
so that the expression for the optimum assumes the form

\begin{equation}
\begin{split}\opt_{\fc} & =\sup_{\alpha,\beta}\frac{m\g(1+a\beta)}{\abs{\alpha+r\beta}+\abs{\alpha-r\beta}+2},\\
 & =\sup_{\beta}\frac{m\g(1+a\beta)}{2r\abs{\beta}+2},
\end{split}
\end{equation}
where the triangle inequality was used in order to arrive at the second
line.

Setting 
\begin{equation}
h(\gb):=\frac{m\g(1+a\beta)}{2r\abs{\beta}+2},
\end{equation}
analyzing $h$ by differentiation for the cases $\gb\ges0$ and $\gb\les0$,
and using the assumption that $a\les r$, one concludes that the supremum
above is attained for $\gb=0$ as for the case $a=0$. Thus,
\[
\opt_{\fc}=\sup_{\beta}h(\beta)=\frac{m\g}{2}.
\]

To construct optimal grasping forces, we note first that $\afc_{0}:\image\jac\to\reals$
is given by 
\begin{equation}
\begin{split}\afc_{0}(\avf_{x_{1}},\avf_{y_{1}},\avf_{x_{2}},\avf_{y_{2}}) & =\fc\comp\wh{\jac}^{-1}(\avf_{x_{1}},\avf_{y_{1}},\avf_{x_{2}},\avf_{y_{2}}),\\
 & =-m\g\avf_{y_{1}}-\frac{\avf_{x_{1}}-\avf_{x_{2}}}{2r}am\g.
\end{split}
\end{equation}
Next, we use a slightly different method, also based on the Helly
construction, for the extension of the covector $\afc_{0}\in(\image\jac)\du$
to some $\afc\h\in\avs\du$. Using the same basis for the grasping
forces as above, we set $\afc\h(0,0,0,1)=c$, where $c\in\reals$
is yet undetermined. Thus,

\begin{equation}
\begin{split}\afc_{\mathrm{H}}\left(\begin{gathered}\avf_{x_{1}}\\
\avf_{y_{1}}\\
\avf_{x_{2}}\\
\avf_{y_{2}}
\end{gathered}
\right) & =\avf_{y_{1}}\afc_{\mathrm{H}}\left(\begin{gathered}0\\
1\\
0\\
1
\end{gathered}
\right)+\frac{\avf_{x_{2}}+\avf_{x_{1}}}{2}\afc_{\mathrm{H}}\left(\begin{gathered}1\\
0\\
1\\
0
\end{gathered}
\right)+\frac{\avf_{x_{1}}-\avf_{x_{2}}}{2}\afc_{\mathrm{H}}\left(\begin{gathered}1\\
0\\
-1\\
0
\end{gathered}
\right)+(\avf_{y_{2}}-\avf_{y_{1}})\afc_{\mathrm{H}}\left(\begin{gathered}0\\
0\\
0\\
1
\end{gathered}
\right),\\
 & =-m\g\avf_{y_{1}}-\frac{\avf_{x_{1}}-\avf_{x_{2}}}{2r}am\g+(\avf_{y_{2}}-\avf_{y_{1}})c,\\
 & =-\frac{am\g}{2r}\avf_{x_{1}}+(-m\g-c)\avf_{y_{1}}+\frac{am\g}{2r}\avf_{x_{2}}+\avf_{y_{2}}c.
\end{split}
\label{eq:gh_for_wafer}
\end{equation}
Since, by the Hahn-Banach theorem $\norm{\afc\h}=\norm{\afc_{0}}=m\g/2$,
one has
\begin{equation}
\max\left\{ \abs{\frac{am\g}{2r}},\abs{-m\g-c},\abs{\frac{-am\g}{2r}},\abs c\right\} =\frac{m\g}{2}.
\end{equation}
We now make an ansatz that $c=-m\g/2$. This will imply that
\begin{equation}
\afc\h=\left(-\frac{am\g}{2r},-\frac{m\g}{2},\frac{am\g}{2r},\frac{m\g}{2}\right)
\end{equation}
which justifies our ansatz as long as $a\les r$. (One can follow
the analogous reasoning for the case $a>r$.)

It is observed that in the Helly construction in Appendix \ref{sec:H-B},
the number $c$ that determines the extension uniquely, might be,
in general, in an interval in $\reals$. However, in our case, this
interval collapses to one point. Hence, the extension is unique in
these examples.

\subsubsection{Using the Euclidean norm for the grasping forces}

A more realistic norm for the space of grasping forces $\avs\du=\{(\afc_{x_{1}},\afc_{y_{1}},\afc_{x_{2}},\afc_{y_{2}})\}$
is
\begin{equation}
\norm g=\max\left\{ \sqrt{g_{x_{1}}^{2}+g_{y_{1}}^{2}},\sqrt{g_{x_{2}}^{2}+g_{y_{2}}^{2}}\right\} .
\end{equation}
The corresponding primal on $\avs$ norm is given by

\begin{equation}
\norm{\avf}=\sqrt{\avf_{x_{1}}^{2}+\avf_{y_{1}}^{2}}+\sqrt{\avf_{x_{2}}^{2}+\avf_{y_{2}}^{2}}.
\end{equation}

Equation (\ref{eq:Optimal_Reactions}) assumes the form
\begin{gather}
\opt_{\fc}=\sup_{\v\in\vs}\frac{\fc(\v)}{\norm{\jac(\v)}}=\sup_{\v\in\vs}\frac{-m\g v_{y}-am\g\omega}{\sqrt{(v_{x}+r\omega)^{2}+v_{y}^{2}}+\sqrt{(v_{x}-r\omega)^{2}+v_{y}^{2}}},
\end{gather}
and setting $v_{x}=\alpha v_{y},\omega=\beta v_{y}$, we get

\begin{gather}
\opt_{\fc}=\sup_{\alpha,\beta}\frac{m\g(1+a\beta)}{\sqrt{(\alpha+r\beta)^{2}+1}+\sqrt{(\alpha-r\beta)^{2}+1}}=\sup_{\beta}\frac{m\g(1+a\beta)}{2\sqrt{r^{2}\beta^{2}+1}}.
\end{gather}
By differentiating the function
\begin{equation}
h(\gb):=\frac{m\g(1+a\beta)}{2\sqrt{r^{2}\beta^{2}+1}},
\end{equation}
one obtains that the maximum is attained at $\gb=a/r^{2}$, and that
\begin{equation}
\opt_{\fc}=\sup h(\beta)=\frac{m\g\sqrt{a^{2}+r^{2}}}{2r}.
\end{equation}
Hence, Equation (\ref{eq:gh_for_wafer}) assumes the form
\begin{equation}
\max\left\{ \sqrt{\left(\frac{am\g}{2r}\right){}^{2}+(-m\g-c)^{2}},\sqrt{\left(\frac{-am\g}{2r}\right){}^{2}+c^{2}}\right\} =\frac{m\g\sqrt{a^{2}+r^{2}}}{2r}.
\end{equation}
We make the ansatz

\begin{equation}
\sqrt{\left(\frac{-am\g}{2r}\right){}^{2}+c^{2}}=\frac{m\g\sqrt{a^{2}+r^{2}}}{2r},\qquad c=\pm\frac{m\g}{2},
\end{equation}
which indeed gives the maximum above. The corresponding optimal grasping
forces are given by 
\begin{equation}
\afc\h=\left(-\frac{am\g}{2r},-\frac{m\g}{2},\frac{am\g}{2r},\frac{m\g}{2}\right)
\end{equation}
as for the supremum norm.

\subsection{Example: \label{subsec:Example_Beam}Supporting the weight of a 2-D
body at three points}

Consider the 2-dimensional analysis of rigid body supported by three
robotic hands. Only vertical forces are applied by the grippers, and
we ignore horizontal displacement and forces. See Figure \ref{fig:3points}.

\begin{figure}
\begin{centering}
\includegraphics[scale=0.6]{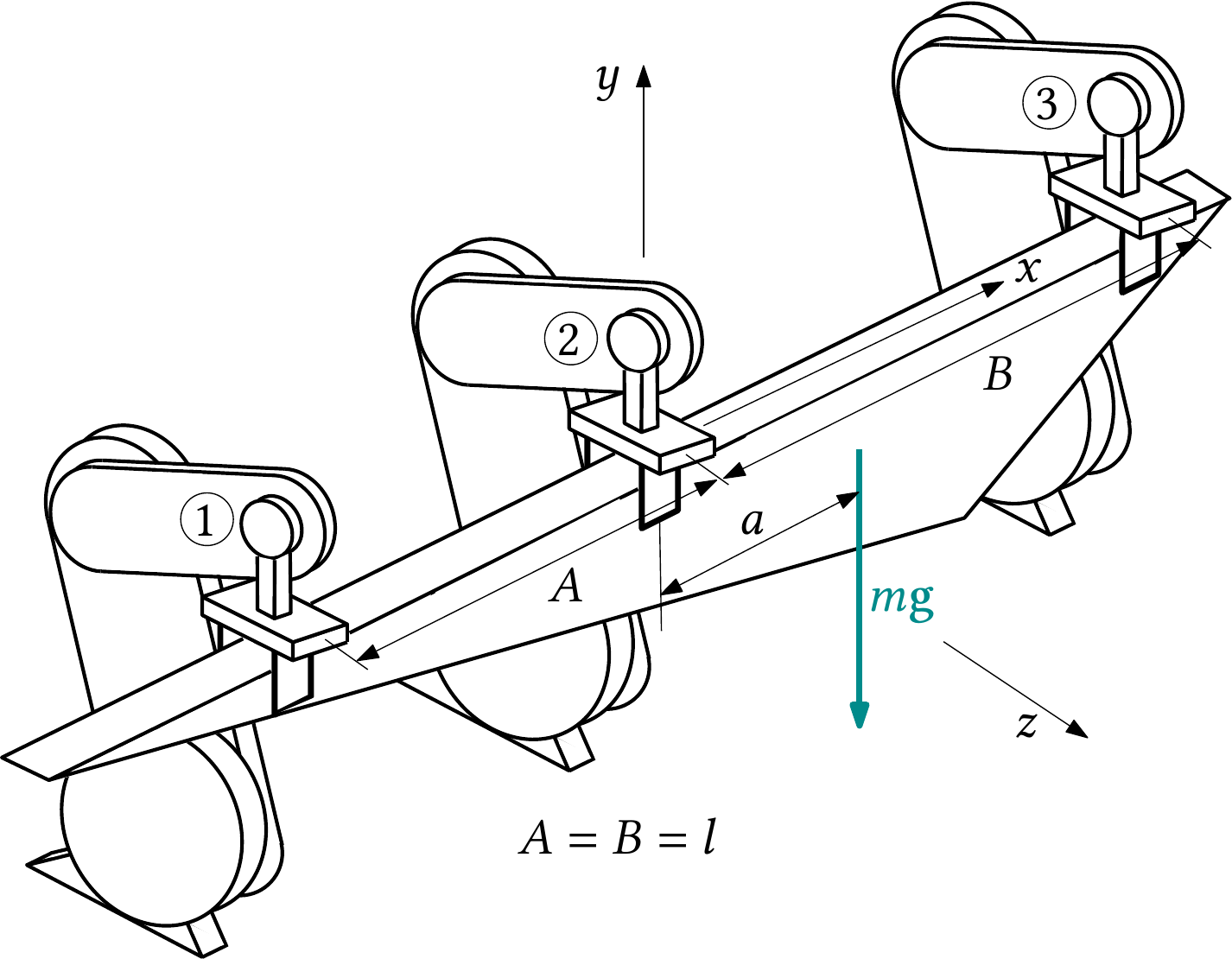}
\par\end{centering}
\caption{\label{fig:3points}Illustrating a 2-dimensional body supported at
three points}
\end{figure}

Here,
\begin{equation}
\vs\isom\{(v_{y},\go)\},\qquad\avs\isom\{(u_{1},u_{2},u_{3})\},
\end{equation}
were $v_{y}$ is the vertical velocity of the point in the body that
is located at the origin in the configuration considered, $\opt$
is the angular velocity of the body, and $\avf_{i}$, are the vertical
velocities of the corresponding grippers as enumerated in the illustration.
Assuming for simplicity that $A=B=l$, one has
\begin{equation}
\jac=T_{\conf}\fk=\left(\begin{array}{cc}
1 & -l\\
1 & 0\\
1 & l
\end{array}\right),\qquad\jac(v_{y},\omega)=\left(\begin{gathered}v_{y}-l\omega\\
v_{y}\\
v_{y}+l\omega
\end{gathered}
\right).
\end{equation}
A generalized force on the body is of the form 
\begin{equation}
\fc=(-m\g,M)=(-m\g,-m\g a),
\end{equation}
where $a$, indicating the location of the center of gravity, serves
to parametrize the moment acting on the body relative to the origin.
A generalized supporting force will be of the form $\afc=(\afc_{1},\afc_{2},\afc_{3})$,
where $\afc_{i}$ indicates the vertical force exerted on the $i$-th
gripper.

Wishing to minimize the maximal vertical force in the grippers, we
use on $\avs\du=\{\afc\}$ the norm
\begin{equation}
\norm{\afc}:=\max\{\afc_{1},\afc_{2},\afc_{3}\}
\end{equation}
for which the primal norm on $\avs$ is
\begin{equation}
\norm{\avf}=\abs{\avf_{1}}+\abs{\avf_{2}}+\abs{\avf_{3}}.
\end{equation}
Using Equation (\ref{eq:Optimal_Reactions}), the optimum is given
by 
\begin{equation}
\opt_{\fc}=\sup_{\v\in\vs}\frac{\fc(\v)}{\norm{\jac(\v)}}=\sup_{\v\in\vs}\frac{-m\g v_{y}-am\g\omega}{\abs{v_{y}-l\omega}+\abs{v_{y}}+\abs{v_{y}+l\omega}}.
\end{equation}
Setting $\ga$ by $\go=\ga v_{y}$, we rewrite,
\begin{equation}
\opt_{\fc}=\sup_{\alpha}\frac{m\g+am\g\alpha}{\abs{1-l\alpha}+1+\abs{1+l\alpha}}.
\end{equation}

Analyzing the function 
\begin{equation}
h(\ga):=\frac{m\g+am\g\alpha}{\abs{1-l\alpha}+1+\abs{1+l\alpha}}
\end{equation}
for the various relative values of $l\ga$, one obtains
\begin{equation}
\opt_{\fc}=\sup_{\alpha}h(\alpha)=\frac{m\g(l+a)}{3l}.
\end{equation}

To compute an optimal generalized grasping force, we first note that
$\fc$ induces the element $\afc_{0}\in(\image\jac)\du$ by 
\[
\afc_{0}(\avf_{1},\avf_{2},\avf_{3})=\fc\comp\wh{\jac}^{-1}(\avf_{1},\avf_{2},\avf_{3})=-m\g\avf_{2}-m\g a\frac{\avf_{3}-\avf_{1}}{2l}.
\]
Next, as a basis for $\image\jac$ we choose $\{(1,1,1),(-1.0,1)\}$,
and we complete it to a basis of $\avs$ by adding the vector $(0,0,1)$.
Following the Helly-Hahn-Banach construction, we set $\afc\h(0,0,1)=c$
for a yet unspecified number $c$. Thus, in terms of $c$,
\begin{equation}
\begin{split}\afc_{\mathrm{H}}\left(\begin{gathered}\avf_{1}\\
\avf_{2}\\
\avf_{3}
\end{gathered}
\right) & =\avf_{1}\afc_{\mathrm{H}}\left(\begin{gathered}1\\
1\\
1
\end{gathered}
\right)+(\avf_{2}-\avf_{1})\afc_{\mathrm{H}}\left(\begin{gathered}-1\\
0\\
1
\end{gathered}
\right)+(\avf_{1}-2\avf_{2}+\avf_{3})\afc_{\mathrm{H}}\left(\begin{gathered}0\\
0\\
1
\end{gathered}
\right),\\
 & =-m\g\avf_{1}-(\avf_{2}-\avf_{1})\frac{m\g a}{l}+(\avf_{1}-2\avf_{2}+\avf_{3}),\\
 & =\left(\frac{m\g a}{l}+c\right)\avf_{1}+\left(-m\g-\frac{m\g a}{l}-2c\right)\avf_{2}+c\avf_{3}.
\end{split}
\label{eq:gh_for_wafer-1}
\end{equation}
It follows that $c$ should satisfy the condition 
\begin{equation}
\max\left\{ \abs{\frac{m\g a}{l}+c},\abs{-m\g-\frac{m\g a}{l}-2c},\abs c\right\} =\frac{m\g(l+a)}{3l},
\end{equation}
which gives the value
\begin{equation}
c=\frac{-m\g(l+a)}{3l}.
\end{equation}
We conclude that the optimal grasping forces are given by
\begin{equation}
\afc\h=\left(\frac{m\g(2a-l)}{3l},\frac{-m\g(l+a)}{3l},\frac{-m\g(l+a)}{3l}\right),
\end{equation}
It is observed that the forces in grippers $\sharp2$ and $\sharp3$
are equal, while the force in gripper $\sharp1$ is different. In
fact, for $a>l/2$, $g_{1}>0$.

\section{\label{sec:Load_Capacity}Bounded Grasping Forces and Worst Case
Loadings}

Conceivably, optimal grasping can be realized if the grasping mechanism
has a force control system. When an external force $\fc$ is acting
on the object and some optimal constraint forces are computed, the
force control system will apply the corresponding grasping forces.
This may be desirable so as to prevent damage to the object and to
limit forces exerted by the gripping mechanism and stresses in it.

In this section we consider the situation where the force-controlled
grasping mechanism can apply all grasping forces $\afc\in\avs\du$
provided that $\norm g\les\yi$ and will not apply forces whose norm
is greater than $\yi$. We will refer to such a situation as \emph{bounded
grasping} and to the given bound, $\yi$, as the \emph{grasping bound}.
In other words, the set of \textit{admissible grasping forces} is
exactly $\cl B_{\yi}(0)$, the closed ball or radius $\yi$ centered
at the origin of $\avs^{*}$. A load $\fc$ acting on the object is
\textit{admissible} if it can be supported by admissible grasping
forces. That is, $\fc$ if admissible if $\fc=\jac^{*}(\afc)$ for
some $\afc\in\cl B_{\yi}(0)$.

\subsection{The Minkowski functional for $K=\protect\jac\protect\du(\protect\cl B_{\protect\yi}(0))$}

The image 
\begin{equation}
K:=\jac\du(\cl B_{\yi}(0))
\end{equation}
 of $\cl B_{\yi}(0)$ under $\jac\du$ consists of the collection
of external forces on the object that may be supported by admissible
grasping forces. Since a ball is a convex set and since the image
of a convex set under a linear mapping is convex, $K$ is a convex
set. It is recalled that for a convex set $K$ whose interior contains
the origin, the Minkowski functional for $K$
\begin{equation}
p:\vs\du\tto\cl{\reals}^{+}
\end{equation}
is defined by
\begin{equation}
p(\fc)=\inf\left\{ r\mid r\in\reals^{+},\,\frac{f}{r}\in K\right\} .
\end{equation}

\begin{figure}
\begin{centering}
\includegraphics[scale=0.6]{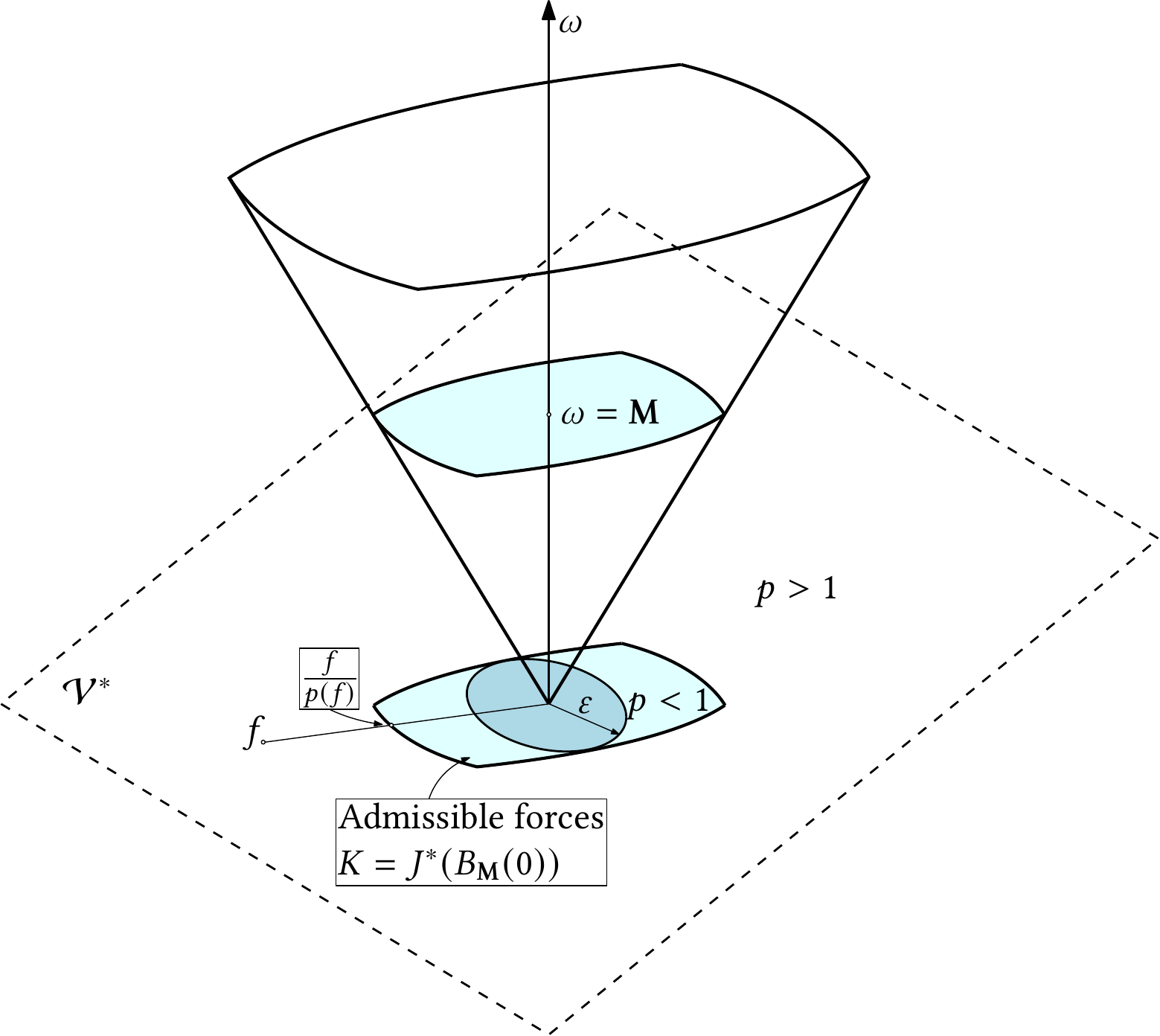}
\par\end{centering}
\caption{\label{fig:Minkowski}Illustrating the Minkowski functional for $K=\protect\jac\protect\du(\protect\cl B_{\protect\yi}(0))$}
\end{figure}
Thus, in order that the Minkowski functional $p$ for $K$ be well
defined, we have to show that there is some $\eps>0$ such that $B_{\eps}(0)\subset K$.
In other words, we have to show there is some $\eps>0$ such that
$\norm{\fc}\les\eps$ implies $\opt_{\fc}\les\yi$, independently
of $\fc$. From the mechanical point of view this means that every
force on the object may be scaled as to be supported by admissible
grasping forces, and there is a lower bound to these scaling factors.

To show the existence of $\eps$ as above, we first define the \textit{sensitivity
of the grasping} as
\begin{equation}
R:=\sup_{\fc\in\vs\du}\{R_{\fc}\}=\sup_{\fc\in\vs\du}\left\{ \frac{\opt_{\fc}}{\norm{\fc}}\right\} .\label{eq:Define_R}
\end{equation}
Thus, $R$ indicates the worst-case sensitivity of the grasping mechanism.
\begin{prop}
\label{prop:R}Let $\wh{\jac}:\vs\to\image\jac$ be the isomorphism
between $\vs$ and its image under $\jac$. Then,
\begin{equation}
R=\sup_{v\in\vs}\left\{ \frac{\norm v}{\norm{\jac(v)}}\right\} =\norm{\wh{\jac}^{-1}}.\label{eq:R}
\end{equation}
\end{prop}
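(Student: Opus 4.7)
The plan is to rewrite $R$ as an iterated supremum, interchange the order, and identify each inner supremum using the primal/dual norm duality and the definition of the operator norm.

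Starting from \eqref{eq:Define_R} and substituting the expression for $R_\fc$ derived in \eqref{eq:sensitivity_ratio}, I would write
\begin{equation*}
R \;=\; \sup_{\fc\in\vs\du}\frac{1}{\norm{\fc}}\sup_{v\in\vs}\frac{\fc(v)}{\norm{\jac(v)}} \;=\; \sup_{\fc\in\vs\du}\sup_{v\in\vs}\frac{\fc(v)}{\norm{\fc}\,\norm{\jac(v)}},
\end{equation*}
where, as in the derivation of $\opt_\fc$, replacing $v$ by $-v$ if necessary allows one to take $\fc(v)\ges0$ so that $\fc(v)=\abs{\fc(v)}$ at the supremum. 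Since the two suprema are independent, they may be swapped, giving
\begin{equation*}
R \;=\; \sup_{v\in\vs}\frac{1}{\norm{\jac(v)}}\sup_{\fc\in\vs\du}\frac{\abs{\fc(v)}}{\norm{\fc}}.
\end{equation*}

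For the inner supremum, I would invoke the finite-dimensional primal-norm representation \eqref{eq:primal_norm}, namely $\norm v=\sup_{\fc\neq0}\abs{\fc(v)}/\norm{\fc}$, which at once yields the first claimed equality
\begin{equation*}
R \;=\; \sup_{v\in\vs}\frac{\norm v}{\norm{\jac(v)}}.
\end{equation*}

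For the second equality, I would use that $\wh{\jac}\from\vs\to\image\jac$ is an isomorphism, so the change of variable $u=\jac(v)=\wh{\jac}(v)$, equivalently $v=\wh{\jac}^{-1}(u)$, ranges bijectively over $\image\jac\setminus\{0\}$ as $v$ ranges over $\vs\setminus\{0\}$. Under this substitution the last expression becomes
\begin{equation*}
R \;=\; \sup_{u\in\image\jac,\,u\ne 0}\frac{\norm{\wh{\jac}^{-1}(u)}}{\norm u},
\end{equation*}
which is, by definition, the operator norm $\norm{\wh{\jac}^{-1}}$ of the linear map $\wh{\jac}^{-1}\from\image\jac\to\vs$, with $\image\jac$ equipped with the norm inherited from $\avs$. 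No serious obstacle is anticipated; the only delicate point is the sign issue in unifying $\fc(v)$ with $\abs{\fc(v)}$ before the swap, which is resolved by the symmetry $v\mapsto -v$.
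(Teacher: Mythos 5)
Your proposal is correct and follows essentially the same route as the paper: substitute the formula for $\opt_{\fc}$ from Proposition \ref{prop:Optimal} into the definition of $R$, interchange the two suprema, identify the inner supremum over $\fc$ with $\norm v$ via the primal-norm duality \eqref{eq:primal_norm}, and then recognize the operator norm of $\wh{\jac}^{-1}$ after the change of variable $\vf=\jac(v)$. Your extra remark about handling the sign via $v\mapsto -v$ is a harmless refinement of a point the paper leaves implicit.
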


\begin{proof}
Using Equation (\ref{eq:Optimal_Reactions}) of Proposition \ref{prop:Optimal},
we have
\begin{equation}
\begin{split}R & =\sup_{\fc\in\vs\du}\left\{ \frac{1}{\norm{\fc}}\sup_{v\in\vs}\left\{ \frac{\fc(v)}{\norm{\jac(v)}}\right\} \right\} ,\\
 & =\sup_{v\in\vs}\left\{ \frac{1}{\norm{\jac(v)}}\sup_{\fc\in\vs\du}\left\{ \frac{\fc(v)}{\norm{\fc}}\right\} \right\} ,\\
 & =\sup_{v\in\vs}\left\{ \frac{\norm v}{\norm{\jac(v)}}\right\} ,\\
 & =\sup_{\vf\in\image\jac}\left\{ \frac{\norm{\wh J^{-1}(\vf)}}{\norm{\vf}}\right\} ,\\
 & =\norm{\wh{\jac}^{-1}}.
\end{split}
\label{eq:RisNormJinv}
\end{equation}
\end{proof}
\begin{rem}
\label{rem:Find_Worst_Case}By Equation (\ref{eq:Define_R}), 
\begin{equation}
R=\sup\{\opt_{\fc}\mid\norm f=1,\,\fc\in\vs\du\}.
\end{equation}
Since $\vs\du$ is finite dimensional, the supremum above is attained
for some $\fc_{0}$ with $\norm{\fc_{0}}=1$---a worst case loading
distribution satisfying $R=\opt_{\fc_{0}}$. Since, on the other hand,
there is a generalized velocity $v_{0}$ such that 
\begin{equation}
R=\sup_{v\in\vs}\left\{ \frac{\norm v}{\norm{\jac(v)}}\right\} =\frac{\norm{v_{0}}}{\norm{\jac(v_{0})}},
\end{equation}
the worst case loading can be obtained form $v_{0}$ by the optimization
problem
\begin{equation}
\sup\left\{ \left.\frac{\fc(v_{0})}{\norm{\jac(v_{0})}}\;\right|\norm{\fc}=1\right\} .
\end{equation}
\end{rem}

\begin{cor}
\label{cor:one_over_R}One has
\begin{equation}
\inf_{\fc\in\vs\du}\left\{ \frac{\norm{\fc}}{\opt_{\fc}}\right\} =\frac{1}{R}>0.
\end{equation}
\end{cor}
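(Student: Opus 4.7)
The plan is to observe that the corollary is essentially the identity $\inf(1/x) = 1/\sup(x)$ applied to the family of sensitivity ratios, combined with the boundedness of $\wh\jac^{-1}$ supplied by Proposition~\ref{prop:R}.

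First, I would restrict attention to $\fc \neq 0$, since for $\fc = 0$ the ratio is ill-defined; note that for $\fc \neq 0$, the injectivity of $\jac$ forces $\afc \neq 0$ in every solution of $\jac\du(\afc) = \fc$, so $\opt_{\fc} > 0$ and the ratio $\norm{\fc}/\opt_{\fc}$ is a positive real number. Next, using the notation of the sensitivity ratio, recall that
\begin{equation}
R_{\fc} = \frac{\opt_{\fc}}{\norm{\fc}}, \qquad \text{so that} \qquad \frac{\norm{\fc}}{\opt_{\fc}} = \frac{1}{R_{\fc}}.
\end{equation}
By the definition of $R$ in Equation (\ref{eq:Define_R}), $R = \sup_{\fc} R_{\fc}$, and the elementary identity $\inf_{\fc}\{1/R_{\fc}\} = 1/\sup_{\fc}\{R_{\fc}\}$ (valid for positive quantities) yields
\begin{equation}
\inf_{\fc\in\vs\du}\left\{\frac{\norm{\fc}}{\opt_{\fc}}\right\} = \frac{1}{R}.
\end{equation}

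Finally, to establish $1/R > 0$, I need to show $R < \infty$. This is exactly the content of Proposition~\ref{prop:R}, which identifies $R$ with $\norm{\wh{\jac}^{-1}}$. Since $\wh\jac \from \vs \to \image\jac$ is an isomorphism between finite-dimensional normed vector spaces, its inverse is a bounded linear map, and hence $\norm{\wh{\jac}^{-1}} < \infty$. Therefore $R$ is a finite positive number and $1/R > 0$, completing the argument.

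I do not expect any real obstacle here: the corollary is a direct book-keeping consequence of Proposition~\ref{prop:R} together with the $\sup$--$\inf$ reciprocal identity, once one has noted that the quantities involved are strictly positive under the standing injectivity hypothesis on $\jac$.
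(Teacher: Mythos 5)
Your proposal is correct and follows essentially the same route as the paper: rewrite $\norm{\fc}/\opt_{\fc}$ as $1/R_{\fc}$, apply the reciprocal $\inf$--$\sup$ identity to get $1/R$, and invoke Proposition \ref{prop:R} together with the boundedness of $\wh{\jac}^{-1}$ (a continuous isomorphism of finite-dimensional spaces) to conclude $R<\infty$, hence $1/R>0$. The only cosmetic remark is that the cleanest reason $\opt_{\fc}>0$ for $\fc\neq0$ is that $\opt_{\fc}=\norm{\afc_{0}}=\norm{\fc\comp\wh{\jac}^{-1}}$ with $\wh{\jac}^{-1}$ an isomorphism, rather than merely that every consistent $\afc$ is nonzero; but this does not affect the argument.
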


\begin{proof}
It follows from the foregoing computation that
\begin{equation}
\begin{split}\inf_{\fc\in\vs\du}\left\{ \frac{\norm{\fc}}{\opt_{\fc}}\right\}  & =\frac{1}{\sup_{\fc\in\vs\du}\left\{ \frac{\opt_{\fc}}{\norm{\fc}}\right\} },\\
 & =\frac{1}{R},\\
 & =\frac{1}{\norm{\wh{\jac}^{-1}}}.
\end{split}
\end{equation}
Since $\wh{\jac}$ is a continuous isomorphism, $\norm{\wh{\jac}^{-1}}<\infty$
and does not vanish.
\end{proof}
\begin{cor}
\label{cor:eps_is_M_over_R}Let 
\begin{equation}
\eps:=\frac{\yi}{R}.
\end{equation}
Then, 
\begin{equation}
\inf\{\norm{\fc}\mid\fc\in\vs\du,\,\opt_{\fc}=\yi\}=\eps.\label{eq:eps_is_inf}
\end{equation}
\end{cor}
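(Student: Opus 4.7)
The plan is to reduce the assertion to Corollary \ref{cor:one_over_R} by exploiting the positive homogeneity of both the norm and the optimum $\opt_{\fc}$ established in Equation~(\ref{eq:Optimum_homogeneous}). The key observation is that on the constraint set $\{\fc : \opt_{\fc} = \yi\}$, the norm $\norm{\fc}$ coincides with $\yi\,\norm{\fc}/\opt_{\fc}$, so minimizing $\norm{\fc}$ over this set is the same as minimizing the scale-invariant ratio $\norm{\fc}/\opt_{\fc}$ over all nonzero $\fc \in \vs\du$.

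First I would note that $\opt_{\fc} > 0$ for every nonzero $\fc \in \vs\du$. This is immediate from the proof of Proposition \ref{prop:Optimal}: $\opt_{\fc} = \norm{\afc_{0}}$, where $\afc_{0} = \fc \circ \wh{\jac}^{-1} \in (\image\jac)\du$, and since $\wh{\jac}^{-1}$ is an isomorphism, $\afc_{0} = 0$ iff $\fc = 0$. Hence the ratio $\norm{\fc}/\opt_{\fc}$ is well-defined on $\vs\du \setminus \{0\}$. Next, for any nonzero $\fc' \in \vs\du$, set
\begin{equation}
\fc := \frac{\yi}{\opt_{\fc'}}\,\fc'.
\end{equation}
By Equation~(\ref{eq:Optimum_homogeneous}), $\opt_{\fc} = \yi$, so $\fc$ lies in the constraint set; and since norming is homogeneous,
\begin{equation}
\frac{\norm{\fc}}{\opt_{\fc}} = \frac{\norm{\fc'}}{\opt_{\fc'}}.
\end{equation}
Conversely, every $\fc$ satisfying $\opt_{\fc} = \yi$ is of this form with $\fc' = \fc$. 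Therefore the sets of ratios coincide:
\begin{equation}
\left\{\frac{\norm{\fc}}{\opt_{\fc}} \;\middle|\; \opt_{\fc} = \yi\right\} = \left\{\frac{\norm{\fc'}}{\opt_{\fc'}} \;\middle|\; \fc' \in \vs\du,\ \fc' \neq 0\right\}.
\end{equation}

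To conclude, I would write, for any $\fc$ with $\opt_{\fc} = \yi$, $\norm{\fc} = \yi \cdot \norm{\fc}/\opt_{\fc}$, and then take the infimum of both sides:
\begin{equation}
\inf\{\norm{\fc} \mid \opt_{\fc} = \yi\} = \yi \cdot \inf_{\fc' \neq 0}\frac{\norm{\fc'}}{\opt_{\fc'}} = \frac{\yi}{R} = \eps,
\end{equation}
where the middle equality is Corollary \ref{cor:one_over_R}. There is no genuine obstacle here beyond the routine verification that the scaling is well-defined; the only point requiring attention is the nonvanishing of $\opt_{\fc'}$ for $\fc' \neq 0$, which is what allows the change of variables between the two formulations of the infimum.
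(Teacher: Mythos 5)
Your proposal is correct and follows essentially the same route as the paper: both reduce the claim to Corollary \ref{cor:one_over_R} via the rescaling $\fc=\yi\fc'/\opt_{\fc'}$, using the homogeneity of $\opt_{\fc}$ from Equation (\ref{eq:Optimum_homogeneous}) to see that the rescaled force satisfies $\opt_{\fc}=\yi$ and that every force in the constraint set arises this way. Your explicit remark that $\opt_{\fc'}>0$ for $\fc'\neq0$ (via the isomorphism $\wh{\jac}^{-1}$) is a small added precision that the paper leaves implicit.
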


\begin{proof}
It follows from the foregoing corollary that 
\begin{equation}
\begin{split}\eps=\frac{\yi}{R} & =\inf_{\fc'\in\vs\du}\left\{ \left\Vert \frac{\yi\fc'}{\opt_{\fc'}}\right\Vert \right\} ,\\
 & =\inf\left\{ \left.\left\Vert \frac{\yi\fc'}{\opt_{\fc'}}\right\Vert \;\,\right|\frac{\yi\fc'}{\opt_{\fc'}}\in\vs\du\right\} .
\end{split}
\end{equation}
Observing that
\begin{equation}
\opt_{\yi\fc'/\opt_{\fc'}}=\yi,
\end{equation}
independently of $\fc'$, and writing $\fc=\yi\fc'/\opt_{\fc'}$,
one has
\begin{equation}
\eps=\inf\{\norm{\fc}\mid\opt_{\fc}=\yi\}.
\end{equation}
\end{proof}

\begin{cor}
A ball of radius $\eps=\yi/R$ is contained in $K$.
\end{cor}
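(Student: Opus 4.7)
The plan is to show the containment $\cl B_{\eps}(0) \subset K$ directly from the definition of the sensitivity $R$ and the attainment statement in Proposition \ref{prop:Optimal}.

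First, I would take an arbitrary $\fc \in \vs\du$ with $\norm{\fc} \les \eps = \yi/R$. From the definition of $R$ in Equation (\ref{eq:Define_R}), we have $\opt_{\fc}/\norm{\fc} \les R$ for every nonzero $\fc$, hence
\begin{equation}
\opt_{\fc} \les R\,\norm{\fc} \les R \cdot \frac{\yi}{R} = \yi.
\end{equation}
(The case $\fc = 0$ is trivial since $0 = \jac\du(0) \in K$.)

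Next, I would invoke part (3) of Proposition \ref{prop:Optimal}, which guarantees the existence of a Hahn--Banach extension $\afc_{\mathrm{H}} \in \avs\du$ attaining the infimum, that is, $\jac\du(\afc_{\mathrm{H}}) = \fc$ with $\norm{\afc_{\mathrm{H}}} = \opt_{\fc}$. Combined with the bound above, $\norm{\afc_{\mathrm{H}}} \les \yi$, so $\afc_{\mathrm{H}} \in \cl B_{\yi}(0)$. Therefore $\fc = \jac\du(\afc_{\mathrm{H}}) \in \jac\du(\cl B_{\yi}(0)) = K$. Since $\fc$ was an arbitrary element of $\cl B_{\eps}(0)$, this proves the containment.

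There is no substantial obstacle here; the argument is essentially a repackaging of Corollary \ref{cor:eps_is_M_over_R}, once one observes that ``$\norm{\fc} \les \eps$ implies $\opt_{\fc} \les \yi$'' is exactly the contrapositive formulation of Equation (\ref{eq:eps_is_inf}), combined with the homogeneity of $\opt$ from Equation (\ref{eq:Optimum_homogeneous}). The only subtlety worth flagging is that one must use the \emph{attainment} of the optimum (not merely its value), which is precisely what the Hahn--Banach/Helly construction provides; without attainment, one would only obtain $\fc$ in the closure of $K$. Since we are in finite dimensions and $\cl B_{\yi}(0)$ is compact and $\jac\du$ is continuous, $K$ is in fact closed, so this distinction is immaterial, but citing Proposition \ref{prop:Optimal}(3) is the cleanest route.
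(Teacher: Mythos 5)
Your argument is correct and matches the route the paper intends: the paper leaves this corollary's proof implicit, relying on Corollary \ref{cor:eps_is_M_over_R} and the attainment of $\opt_{\fc}$ from Proposition \ref{prop:Optimal}, which is exactly what you have spelled out (the bound $\opt_{\fc}\les R\norm{\fc}\les\yi$ followed by the existence of a minimizer $\afc_{\mathrm{H}}\in\cl B_{\yi}(0)$ with $\jac\du(\afc_{\mathrm{H}})=\fc$). Your remark that attainment, not just the value of the infimum, is what places $\fc$ in $K$ rather than merely in its closure is a worthwhile point that the paper does not make explicit.
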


It follows from the analysis presented above that indeed $K$ contains
an open neighborhood of the zero element in $\vs\du$ and the Minkowski
functional $p$ is well defined for $K$.

\subsection{\label{subsec:Sensitivity}The sensitivity of the grasping and the
load capacity ratio}

Once the validity of the Minkowski functional has been established,
we can turn to its applications in the study of bounded grasping reactions
and worst case loadings.

It is first noted that for a given force $\fc$, under scaling by
$1/p(\fc)$, the force $\fc/p(\fc)$ is located on the boundary of
$K$. Hence,

\begin{equation}
\opt_{\fc/p(\fc)}=\yi.
\end{equation}
However, as we have already seen that $\opt_{\yi\fc/\opt_{\fc}}=\yi$,
\begin{equation}
p(\fc)=\frac{\opt_{\fc}}{\yi}.
\end{equation}

Let 
\begin{equation}
\fs(\fc):=\frac{1}{p(\fc)}=\frac{\yi}{\opt_{\fc}}.\label{eq:Def_Safety_Factor}
\end{equation}
Then, if $\fc\ne0$ so that $p(\fc)\ne0$,
\begin{equation}
\begin{split}\fs(\fc) & =\frac{1}{p(\fc)},\\
 & =\frac{1}{\inf\{r\in\reals^{+}\mid\fc/r\in K\}},\\
 & =\sup\{1/r\mid r\in\reals^{+},\,\fc/r\in K\},\\
 & =\sup\{s\in\reals^{+}\mid s\fc\in K\}.
\end{split}
\label{eq:fac_safe_1}
\end{equation}
Thus, $\fs(\fc)$ is interpreted naturally as the \textit{factor of
safety,} the largest positive number by which we can multiply $\fc$
so that it can be supported by admissible grasping forces.

In various situations there are uncertainties as to the nature of
forces that will act on the object. In particular, one would like
to characterize the forces that the grasping mechanism will be able
to support. This is relevant especially in the case of bounded grasping.
For example, one would like to identify the loading conditions which
are most likely to cause inadmissible grasping forces so that grasping
fails. The following proposition gives a bound on the norm of a force
on the object that will ensure its admissibility, independently of
its distribution. In addition to the obvious dependence of this bound
on the value of $\yi$, it turns out that the bound depends only on
the geometry of the grasping mechanism as reflected by the mapping
$\jac$.
\begin{prop}
\label{prop:LoadCap}Any force $\fc$ acting on the object is admissible
if (we use the foregoing notation)
\begin{equation}
\norm{\fc}\les\frac{\yi}{R}=\frac{\yi}{\norm{\wh{\jac}^{-1}}}.
\end{equation}
The condition above is optimal in the sense that for $\gd>\yi/R$,
there is some force $\fc$ with $\norm f=\gd$ that cannot be supported
by the grasping mechanism. Hence, we will refer to $\yi/R$ as the
\emph{load capacity of the grasping}.
\end{prop}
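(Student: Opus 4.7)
The plan is to unpack the definition of admissibility in terms of the optimum $\opt_{\fc}$ and then apply the inequality $\opt_{\fc}\les R\norm{\fc}$, which is immediate from the definition of $R$. For the sharpness claim, I would invoke the attainment of the supremum in $R$, noted in Remark \ref{rem:Find_Worst_Case}, to exhibit an explicit inadmissible force at every norm exceeding $\yi/R$.

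First I would note that a force $\fc\in\vs\du$ is admissible exactly when there exists $\afc\in\cl B_{\yi}(0)$ with $\jac\du(\afc)=\fc$, and by Proposition \ref{prop:Optimal} the infimum of $\norm{\afc}$ over all such preimages equals $\opt_{\fc}$ and is attained. Hence $\fc$ is admissible if and only if $\opt_{\fc}\les\yi$. From Equation (\ref{eq:Define_R}) we have $\opt_{\fc}\les R\norm{\fc}$ for every $\fc\in\vs\du$, so if $\norm{\fc}\les\yi/R$ then $\opt_{\fc}\les\yi$ and $\fc$ is admissible, which proves the first assertion; by Proposition \ref{prop:R} this bound can equivalently be written as $\yi/\norm{\wh\jac^{-1}}$.

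For the optimality, I would use Remark \ref{rem:Find_Worst_Case}: since $\vs\du$ is finite dimensional, the supremum in (\ref{eq:Define_R}) is attained at some $\fc_{0}$ with $\norm{\fc_{0}}=1$ and $\opt_{\fc_{0}}=R$. Given $\gd>\yi/R$, set $\fc=\gd\fc_{0}$. Then $\norm{\fc}=\gd$, and by the positive homogeneity in Equation (\ref{eq:Optimum_homogeneous}),
\begin{equation}
\opt_{\fc}=\gd\opt_{\fc_{0}}=\gd R>\yi,
\end{equation}
so $\fc$ cannot be supported by any admissible grasping force. This shows that the bound $\yi/R$ cannot be improved.

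The only potentially delicate point is the attainment claim used in the sharpness argument, but this is already established in Remark \ref{rem:Find_Worst_Case} via the finite dimensionality of $\vs\du$ together with the continuity of $\fc\mapsto\opt_{\fc}$ on the unit sphere (which follows from Equation (\ref{eq:Optimal_Reactions}) and the norm-preserving isomorphism $\wh\jac\du$). Hence the proof should be short, essentially a direct translation between the suprema defining $R$ and the admissibility condition, with no further obstacle.
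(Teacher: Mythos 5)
Your proposal is correct and follows essentially the same route as the paper: both arguments reduce admissibility to the condition $\opt_{\fc}\les\yi$ (using attainment of the infimum from Proposition \ref{prop:Optimal}), apply $\opt_{\fc}=R_{\fc}\norm{\fc}\les R\norm{\fc}$ for sufficiency, and use the definition (and attainment) of the supremum defining $R$ for sharpness. The paper merely packages this as the computation of $\sup\{\gd\mid B_{\gd}(0)\subset K\}$, which is an organizational rather than a mathematical difference.
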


\begin{proof}
To prove the claim, we should show that
\begin{equation}
\frac{\yi}{R}=\sup\{\gd\mid B_{\gd}(0)\subset K,\,\gd>0\}=\sup\{\gd\mid\opt_{\fc}\les\yi\text{ if }\norm{\fc}\les\gd\}
\end{equation}
(see illustration in Figure \ref{fig:load_cap}).
\begin{figure}
\begin{centering}
\includegraphics[scale=0.6]{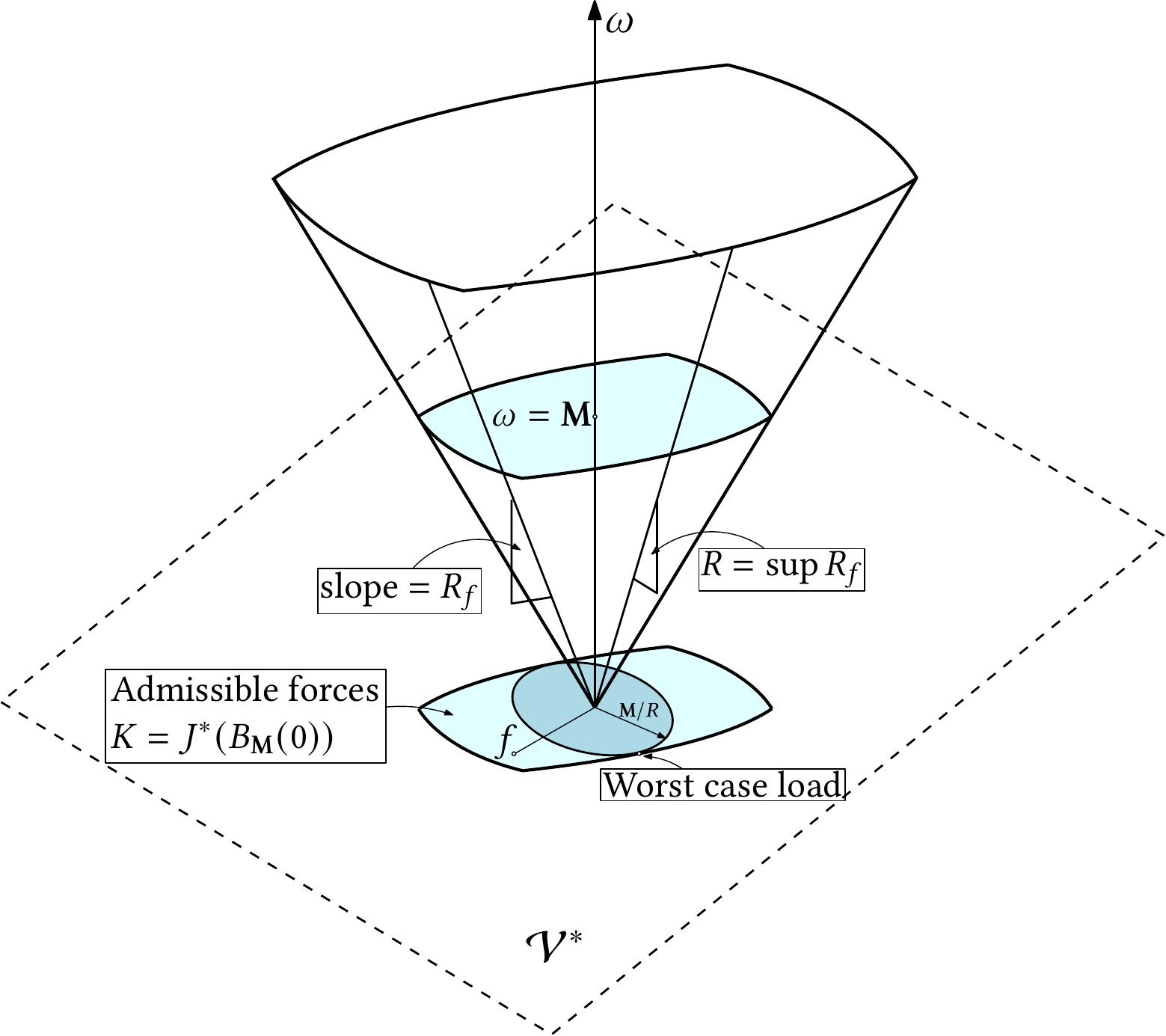}
\par\end{centering}
\caption{\label{fig:load_cap}Illustrating the various objects considered in
Proposition \ref{prop:LoadCap}.}
\end{figure}

Given $\gd>0$, consider the condition, 
\begin{equation}
\opt_{\fc}\les\yi\text{ if }\norm{\fc}\les\gd.\label{eq:Condition}
\end{equation}
Since $\opt_{\fc}=R_{\fc}\norm{\fc}$ and $R_{\fc}>0$, condition
(\ref{eq:Condition}) is equivalent to 
\begin{equation}
\begin{split}\yi & \ges\sup\{\opt_{\fc}\mid\norm{\fc}\les\gd\},\\
 & =\sup\{R_{\fc}\norm{\fc}\mid\norm{\fc}\les\gd\},\\
 & =\sup\{R_{\fc}\gd\mid\norm f=\gd\},\\
 & =\gd R,
\end{split}
\end{equation}
where we have used the fact that $R_{\fc}$ is independent of $\norm{\fc}$
as in Remark \ref{rem:Rf_norm-indep}, and the definition of $R$
in Equation (\ref{eq:Define_R}).

It follows that 
\begin{equation}
\begin{split}\sup\{\gd\mid B_{\gd}(0)\subset K,\,\gd>0\} & =\sup\{\gd\mid\gd R\les\yi\},\\
 & =\frac{\yi}{R}.
\end{split}
\end{equation}
\end{proof}

\subsection{Coulomb friction as bounded grasping}

As an example of bounded grasping, consider the case where a gripper
applies known normal forces to an object and we are concerned with
constraining the motion of the object by the resulting friction forces
in the plane perpendicular to the applied forces. That is, using sliding,
approach, and normal vectors as a basis for a coordinate system attached
to the gripper, as for example in \cite[p. 72]{Wolovich}, the normal
force acts along the sliding direction and the friction force acts
in the approach -- normal plane. Thus, under the assumption of Coulomb
friction, the friction grasping force is bounded. Some further remarks
follow.

\subsubsection{A general framework}

We start by setting up a general framework that is suitable for the
analysis of various examples where Coulomb friction plays an important
role. Assume that the vector space $\avs$ has a structure of a Cartesian
product $\avs=\avs_{1}\times\avs_{2}.$ We denote the projections
on the two subspaces by
\begin{equation}
\pi_{a}:\avs\tto\avs_{a},\qquad a=1,2.
\end{equation}
We will also use the notation 
\begin{equation}
\jac_{a}=\pi_{a}\comp\jac:\vs\tto\avs_{a},\qquad a=1,2.
\end{equation}

We recall that $\avs^{*}=\avs_{1}^{*}\times\avs_{2}^{*}$, and that
for $(\afc_{1},\afc_{2})\in\avs\du$, 
\begin{equation}
\jac^{*}(\afc_{1},\afc_{2})=\jac_{1}^{*}(\afc_{1})+\jac_{2}^{*}(\afc_{2}).\label{eq:dual_map_cart_prod}
\end{equation}

For a pair $(\afc_{1},\afc_{2})\in\avs\du$, $\afc_{1}\in\avs_{1}\du$
is interpreted as a known fixed grasping force vector, such as the
force in the sliding direction applied by a gripper, and so $\avs_{1}^{*}$
is the vector space of known fixed grasping forces. The friction grasping
forces are elements $\afc_{2}\in\avs_{2}\du$. Thus, as long as there
are no friction grasping forces, $\afc=(\afc_{1},0)$ and these grasping
forces equilibrate the \emph{initial} external force
\begin{equation}
\fc_{0}=\jac\du(\afc_{1},0)=\jac_{1}\du(\afc_{1}),
\end{equation}
which may be zero in many examples. It is noted that while $\jac$
is assumed to be injective, the component $\jac_{1}:\vs\to\avs_{1}$
is not assumed to be injective necessarily for this may mean that
the constraints preventing motion in $\avs_{1}$ are sufficient to
fix the object.

The assumption that we have Coulomb friction at the supports, implies
that we have a norm $\norm{\avf_{2}}_{2}$ on $\avs_{2}$ induced
by a norm $\norm{\afc_{2}}_{2}$ on $\avs_{2}^{*}$. The norm $\norm{\afc_{2}}_{2}$
depends in general one the values of the fixed reaction $\afc_{1}$.

Assume now that an additional external force $\fc$ is applied on
the mechanism so that the total external force is $\fc_{0}+\fc$.
Thus, the condition that the reactions $\afc=(\afc_{1},\afc_{2})$
equilibrate the total external force is 
\begin{equation}
\fc_{0}+\fc=\jac^{*}(\afc_{1},\afc_{2}).
\end{equation}
However, using (\ref{eq:dual_map_cart_prod})
\begin{equation}
\fc_{0}+\fc=\jac_{1}^{*}(\afc_{1})+\jac_{2}^{*}(\afc_{2}),
\end{equation}
and the assumption that the reactions $\afc_{1}$ are given and equilibrate
$\fc_{0}$ gives
\begin{equation}
\fc=\jac_{2}^{*}(\afc_{2}).\label{eq:Equil-Friction}
\end{equation}
Thus, all the observations made in Sections \ref{subsec:Optimum}
and \ref{subsec:Sensitivity} apply to the situation described above
in case $\jac_{2}$ is injective. Note that the assumption that $\jac_{2}$
is injective implies that any force $\fc$ may be supported by friction
alone (under the given normal forces $\afc_{1}$). This is summarized
by the following.
\begin{prop}
\label{prop:Result-Friction} Assume that the mapping $\jac_{2}:\vs\to\avs_{2}$
is injective. There is an optimal reaction $\afc_{2\textrm{H}}\in\avs_{2}^{*}$
(not unique)
\begin{equation}
\opt_{\fc}:=\norm{\afc_{2\mathrm{H}}}=\inf_{f=\jac_{2}^{*}(g_{2})}\norm{g_{2}}
\end{equation}
 for which (\ref{eq:Equil-Friction}) holds. It satisfies
\begin{equation}
\opt_{\fc}=\norm{\afc_{2\mathrm{H}}}=\sup_{v\in\vs}\frac{\fc(v)}{\norm{\jac_{2}(v)}}.\label{eq:Expression_Optimum-1}
\end{equation}
In addition, let
\begin{equation}
R:=\sup_{\fc\in\vs^{*}}\frac{\norm{\opt_{\fc}}}{\norm{\fc}}.\label{eq:Sensitivity_Ratio-1}
\end{equation}
Then, 
\begin{equation}
R=\sup_{v\in\vs}\frac{\norm v}{\norm{\jac_{2}(v)}}.
\end{equation}
\end{prop}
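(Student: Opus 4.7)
The plan is to observe that Proposition \ref{prop:Result-Friction} is essentially a direct consequence of Propositions \ref{prop:Optimal} and \ref{prop:R}, applied with the triple $(\vs,\avs,\jac)$ replaced by $(\vs,\avs_2,\jac_2)$. The derivation leading to Equation (\ref{eq:Equil-Friction}) has already carried out the main mechanical reduction: since the fixed reactions $\afc_1$ equilibrate $\fc_0$, the friction reactions $\afc_2\in\avs_2\du$ must satisfy $\fc=\jac_2\du(\afc_2)$, which is exactly the equilibrium condition studied in Section \ref{sec:Optimization}, only with $\jac_2$ in the role of $\jac$.

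With this identification in hand, I would first verify that the hypotheses of Proposition \ref{prop:Optimal} are met for the new triple. Injectivity of $\jac_2$ is the standing assumption; the norm $\norm{\afc_2}_2$ on $\avs_2\du$ is the Coulomb-friction norm (possibly depending on $\afc_1$), and the corresponding primal norm on $\avs_2$ is induced as in Section \ref{sec:Optimization}. Parts (1), (2) and (3) of Proposition \ref{prop:Optimal} then yield, in order: the existence of some $\afc_2\in\avs_2\du$ satisfying the equilibrium, the formula $\opt_{\fc}=\sup_{v\in\vs}\fc(v)/\norm{\jac_2(v)}$, and the attainment of this infimum by a Helly--Hahn--Banach extension $\afc_{2\mathrm{H}}$ of the covector $\fc\comp\wh{\jac}_2^{-1}\in(\image\jac_2)\du$ from $\image\jac_2$ to $\avs_2$.

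The formula for $R$ then follows by repeating, verbatim with $\jac_2$ in place of $\jac$, the chain of equalities (\ref{eq:RisNormJinv}) from the proof of Proposition \ref{prop:R}: substitute the expression for $\opt_{\fc}$ into the definition of $R$, interchange the suprema over $\fc$ and $v$, and apply the dual characterization $\norm v=\sup_{\fc\ne0}\abs{\fc(v)}/\norm{\fc}$ of the primal norm. I do not anticipate any real obstacle, since the statement is a formal corollary of results already established. The one conceptual point worth underlining is that injectivity of $\jac_2$ is a genuinely stronger hypothesis than injectivity of $\jac$: a velocity $v$ with $\jac_2(v)=0$ but $\jac_1(v)\ne0$ would leave $\jac$ injective while spoiling the injectivity of $\jac_2$. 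Mechanically, the injectivity of $\jac_2$ expresses the fact that friction alone is kinematically determinate, which is precisely what is needed so that every additional external force $\fc$ on the object may be supported by friction under the given fixed reactions $\afc_1$.
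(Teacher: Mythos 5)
Your proposal is correct and follows exactly the route the paper intends: the paper states Proposition \ref{prop:Result-Friction} without a separate proof, remarking only that ``all the observations made in Sections \ref{subsec:Optimum} and \ref{subsec:Sensitivity} apply'' once the equilibrium condition has been reduced to $\fc=\jac_{2}^{*}(\afc_{2})$, which is precisely your substitution of $(\vs,\avs_{2},\jac_{2})$ for $(\vs,\avs,\jac)$ in Propositions \ref{prop:Optimal} and \ref{prop:R}. Your added observation that injectivity of $\jac_{2}$ is genuinely stronger than injectivity of $\jac$, and is what guarantees that friction alone can support any additional load, matches the paper's own remark preceding the proposition.
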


\subsubsection{A constitutive model for friction}

We assume that at a generic constraint point, say $\ga$, where the
reaction is some given $\afc_{1\ga}\ges0$, the Euclidean norm of
the maximal friction grasping force is given by $\norm{\afc_{2\ga}}$
which satisfies
\begin{equation}
\norm{\afc_{2\ga}}\les\mu_{\ga}\afc_{1\ga},
\end{equation}
for some known coefficient $\mu_{\ga}$ of static friction. As $\mu_{\ga}$
and $\afc_{\ga}$ are assumed to be given, we may write this condition
as 
\begin{equation}
\left\Vert \frac{\afc_{2\ga}}{\mu_{\ga}\afc_{1\ga}}\right\Vert \les1.
\end{equation}
Grasping with friction forces is admissible if the last condition
holds for every value of $\ga$, or equivalently, if
\begin{equation}
\sup_{\ga}\left\Vert \frac{\afc_{2\ga}}{\mu_{\ga}\afc_{1\ga}}\right\Vert \les1.
\end{equation}
The condition suggests that for the norm of the friction forces we
use the weighted norm
\begin{equation}
\norm{\afc_{2}}=\sup_{\ga}\{\nu_{\ga}\norm{\afc_{\ga}}\},\qquad\nu_{\ga}=\frac{1}{\mu_{\ga}\afc_{1\ga}}.\label{eq:Norm_Friction}
\end{equation}

In view of Equation (\ref{eq:Equil-Friction}), the notions introduced
in Section \ref{subsec:Sensitivity} apply to the case of friction
with $\yi_{2}=1$, where $\yi_{2}$ is the grasping bound for admissible
$\afc_{2}\in\avs_{2}\du$ in analogy with the grasping bound $\yi$
introduced in Section \ref{sec:Load_Capacity}.
\begin{rem}
It follows from the setting described above, that in case 
\begin{equation}
\opt_{\fc}:=\inf_{\jac_{2}^{*}(\afc_{2})=f}\norm{\afc_{2}}>1,
\end{equation}
then, $f$ cannot be supported by friction grasping.
\end{rem}

For the factor of safety, as in (\ref{eq:Def_Safety_Factor}) and
(\ref{eq:fac_safe_1}), we therefore have to the case of friction
\begin{equation}
\fs(\fc)=\sup\{s>0\mid\opt_{s\fc}=s\opt_{\fc}\les1\},
\end{equation}
and
\begin{equation}
\fs(\fc)=\frac{1}{\opt_{\fc}}.\label{eq:Mult-Optim-Friction}
\end{equation}

The main assumption regarding static friction is:

\bigskip{}

\textbf{\textit{Constitutive assumption for Coulomb friction:}}\textit{\emph{
}}\emph{A force $\fc$ is supported by (not necessarily unique) friction
forces if 
\begin{equation}
\opt_{\fc}\les1.
\end{equation}
}

Thus, in terms of safety factor, a force $f$ is supported by friction
reactions if and only if 
\begin{equation}
\fs(\fc)\ges1.
\end{equation}

\begin{rem}
Consider an external force $\fc$ such that $\opt_{\fc}<1$, strictly.
The constitutive assumption above implies that there are some friction
reactions $\afc_{2}$, with $\norm{\afc_{2}}\les1$ and $\jac^{*}(\afc_{2})=\fc$.
Thus, in general, $\norm{\afc_{2}}\ges\opt_{\fc}$ and $\afc_{2}$
is not necessarily optimal. For a force $\fc$ with $\opt_{\fc}=1$,
any friction forces $\afc_{2}$ with $\jac^{*}(\afc_{2})=\fc$, must
satisfy $\norm{\afc_{2}}=1=\opt_{\fc}$ and hence, $\afc_{2}$ is
optimal. We conclude that the frictions forces for a force $\fc$
with $\opt_{\fc}=1$ are optimal.
\end{rem}

Next, we consider the implications of the constitutive assumption
for friction forces for the sensitivity ratio $R$. It follows from
Equation (\ref{eq:Define_R}) that for each external loading $\fc$,
\begin{equation}
\frac{\norm{\afc_{2\mathrm{H}}}}{\norm{\fc}}=\frac{\opt_{\fc}}{\norm{\fc}}\les R,\qquad\text{or,}\qquad\norm{\fc}R\ges\norm{\afc_{2\mathrm{H}}}=\opt_{\fc}.
\end{equation}

\begin{cor}
\label{cor:Res_Fric_Gasp}For any external force $\fc$ such that
\begin{equation}
\norm f\les\frac{1}{R},\qquad\text{we have,}\qquad\opt_{\fc}\les1.\label{eq:Sens_Appl_Fric}
\end{equation}
\emph{It then follows from the constitutive assumption for friction
that any external force $\fc$ such that $\norm{\fc}\les1/R$ may
be supported by friction grasp, independently of its distribution,
direction, etc.}
\end{cor}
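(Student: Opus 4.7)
The plan is to read off the corollary directly from the inequality that is displayed immediately above it, namely
\begin{equation}
\norm{\fc}\, R \ges \norm{\afc_{2\mathrm{H}}} = \opt_{\fc},
\end{equation}
which itself is a restatement of the defining supremum $R = \sup_{\fc} \opt_{\fc}/\norm{\fc}$ from Equation~(\ref{eq:Sensitivity_Ratio-1}) (the version of~(\ref{eq:Define_R}) that applies in the friction setting through Proposition~\ref{prop:Result-Friction}). So the only substantive step is a one-line substitution.

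First I would fix an external force $\fc$ satisfying the hypothesis $\norm{\fc} \les 1/R$. Multiplying both sides by the positive number $R$ gives $R\,\norm{\fc} \les 1$. Combining this with the displayed inequality above yields $\opt_{\fc} \les R\,\norm{\fc} \les 1$, which is the first assertion of the corollary. Second, I would invoke the constitutive assumption for Coulomb friction stated just before the corollary: any external force with $\opt_{\fc}\les 1$ is supported by (not necessarily unique) friction forces $\afc_{2}\in\avs_{2}\du$ with $\norm{\afc_{2}}\les 1$ and $\jac_{2}\du(\afc_{2})=\fc$. Combining the two steps gives the stated conclusion that every $\fc$ with $\norm{\fc}\les 1/R$ is supported by friction grasp, regardless of its direction or distribution.

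There is essentially no obstacle here: the corollary is a direct specialization of Proposition~\ref{prop:LoadCap} (with grasping bound $\yi=\yi_{2}=1$ corresponding to normalized friction) to the friction framework set up through $\jac_{2}$, and all the real work has already been done in establishing Proposition~\ref{prop:Result-Friction} and the identification $R=\norm{\wh\jac_{2}^{-1}}$. The only conceptual point worth highlighting explicitly in the proof is that the bound $1/R$ is \emph{universal} over the direction of $\fc$, since $R$ was defined by taking a supremum over all $\fc\in\vs\du$; this is what justifies the final phrase ``independently of its distribution, direction, etc.'' in the statement.
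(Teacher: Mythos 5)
Your argument is correct and is exactly the one the paper intends: the corollary follows by multiplying the hypothesis $\norm{\fc}\les 1/R$ by $R$, combining with the displayed inequality $\opt_{\fc}\les R\,\norm{\fc}$ (which restates the definition of $R$ as a supremum), and then invoking the constitutive assumption for Coulomb friction. No gaps; your added remark that the universality over directions comes from the supremum defining $R$ is a fair gloss on the paper's phrase ``independently of its distribution, direction, etc.''
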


\subsection{Example: The sensitivity for a 2-dimensional object held at two points}

We return to system considered in Example \ref{subsec:Example:wafer}
and Figure \ref{fig:wafer1}, and we want to compute the sensitivity
$R$ of the grasping as given by Equation (\ref{eq:R}), Proposition
\ref{prop:R}. This will enable us to evaluate the capacity of the
system to support arbitrary external forces.

The sensitivity $R$ is defined relative to a norm on the space, $\vs\du=\{(f_{x},f_{y},M)\}$,
of all external forces on the object which is the dual norm of some
norm on the space of generalized velocities $\vs=\{(v_{x},v_{y},\go)\}$.
Thus, the norm chosen in $\vs\du$ is
\begin{equation}
\norm{\fc}=\norm{(f_{x},f_{y},M)}=\sqrt{\fc_{x}^{2}+\fc_{y}^{2}+(M/r)^{2}}.
\end{equation}
The corresponding primal norm on $\vs$ is
\begin{equation}
\norm{\v}=\norm{(v_{x},v_{y},\go)}=\sqrt{v_{x}^{2}+v_{y}^{2}+(r\omega)^{2}}.
\end{equation}
Thus,
\begin{equation}
R=\sup_{v\in\vs}\left\{ \frac{\norm{\v}}{\norm{\jac(v)}}\right\} =\sup_{\v\in\vs}\frac{\sqrt{v_{x}^{2}+v_{y}^{2}+(r\omega)^{2}}}{\sqrt{(v_{x}+r\omega)^{2}+v_{y}^{2}}+\sqrt{(v_{x}-r\omega)^{2}+v_{y}^{2}}}.
\end{equation}
Using the notation
\begin{equation}
\alpha=v_{x}^{2}+v_{y}^{2}+(r\omega)^{2},\quad\beta=2v_{x}r\omega,
\end{equation}
one obtains,
\begin{equation}
\begin{split}R & =\sup_{\alpha,\beta}\frac{\sqrt{\alpha}}{\sqrt{\alpha-\beta}+\sqrt{\alpha+\beta}},\\
 & =\sup_{\alpha,\beta}\sqrt{\frac{\alpha}{\alpha-\beta+\alpha+\beta+2\sqrt{\alpha+\beta}\sqrt{\alpha-\beta}}},\\
 & =\sup_{\alpha,\beta}\sqrt{\frac{\alpha}{2\alpha+2\sqrt{\alpha+\beta}\sqrt{\alpha-\beta}}},\\
 & =\sqrt{\frac{\alpha}{2\alpha}},
\end{split}
\end{equation}
and it follows that 
\begin{equation}
R=\frac{1}{\sqrt{2}}.
\end{equation}

We conclude from Proposition \ref{prop:LoadCap}, that under a force
controlled grasping bounded by $\yi$, the system can support an arbitrary
combination of an external force and a moment, $\fc=(f_{x},f_{y},M)$,
as long as
\begin{equation}
\sqrt{\fc_{x}^{2}+\fc_{y}^{2}+(M/r)^{2}}\les\frac{\yi}{R}=\sqrt{2}\yi.
\end{equation}

\subsection{\label{subsec:Example-R-for_Beam}Example: Grasping sensitivity for
a 2-D object supported at three points}

We return to the system considered in Section \ref{subsec:Example_Beam},
Figure \ref{fig:3points}, where the weight of a 2-dimensional object
is supported at three points, and we calculate the sensitivity of
the grasping. As a norm for the external load, we choose
\begin{equation}
\norm{\fc}=\sqrt{\fc_{y}^{2}+(M/l)^{2}}=m\g\sqrt{1+(a/l)^{2}}.
\end{equation}
The corresponding primal norm is 
\begin{equation}
\norm v=\sqrt{v_{y}^{2}+(l\omega)^{2}}.
\end{equation}
Thus, 
\begin{equation}
R=\sup_{v\in\vs}\left\{ \frac{\norm{\v}}{\norm{\jac(v)}}\right\} =\sup_{\v\in\vs}\frac{\sqrt{v_{y}^{2}+(l\omega)^{2}}}{\abs{v_{y}-l\omega}+\abs{v_{y}}+\abs{v_{y}+l\omega}}.
\end{equation}
Analyzing the function in the argument of the supremum for various
relative values of $v_{y}$ and $l\go$, the value $R=1/2$ is obtained.

\subsection{\label{subsec:Example-mechanism}Example: The sensitivity of grasping
for a mechanism}

This example demonstrates the application of the foregoing analysis
for the case where the object is a mechanism rather than a rigid body.
Consider the system illustrated in Figure \ref{fig:mechanism} where
the object is a two-dimensional, two-link mechanism. It is assumed
that the gripping force (in the $z$-direction) exerted by the left
gripper is given. This implies that the Coulomb friction force in
the $x-y$ plane, which the gripper applies to the mechanism on the
left, is bounded. We denote this maximal friction force by $\afc_{\yi}$.
The arm on the right end of the mechanism is assumed to apply a fixed
force in the $-x$-direction. Thus, considering only the mechanics
of the mechanism in the $x-y$ plane, the vertical friction force
on the right hand side is bounded as well. It is assumed that the
vertical force on the right is bounded by $g_{\yi}$, also. In addition,
it is assumed that that there is friction in the joint located at
the origin. The friction at the joint enables friction moment $L_{\yi}$
that the links can apply to one another, and it is assumed that it
is bounded by $\abs{L_{\yi}}\les10\afc_{\yi}l$.

In the notation for this example, we do not use the decomposition
$\afc=(\afc_{1},\afc_{2})$ as we simply ignore the applied normal
forces which cancel one another. These normal forces just determine
the maximal friction forces so we will simply write $\afc$ for $\afc_{2}$.

\begin{figure}
\begin{centering}
\includegraphics[scale=0.6]{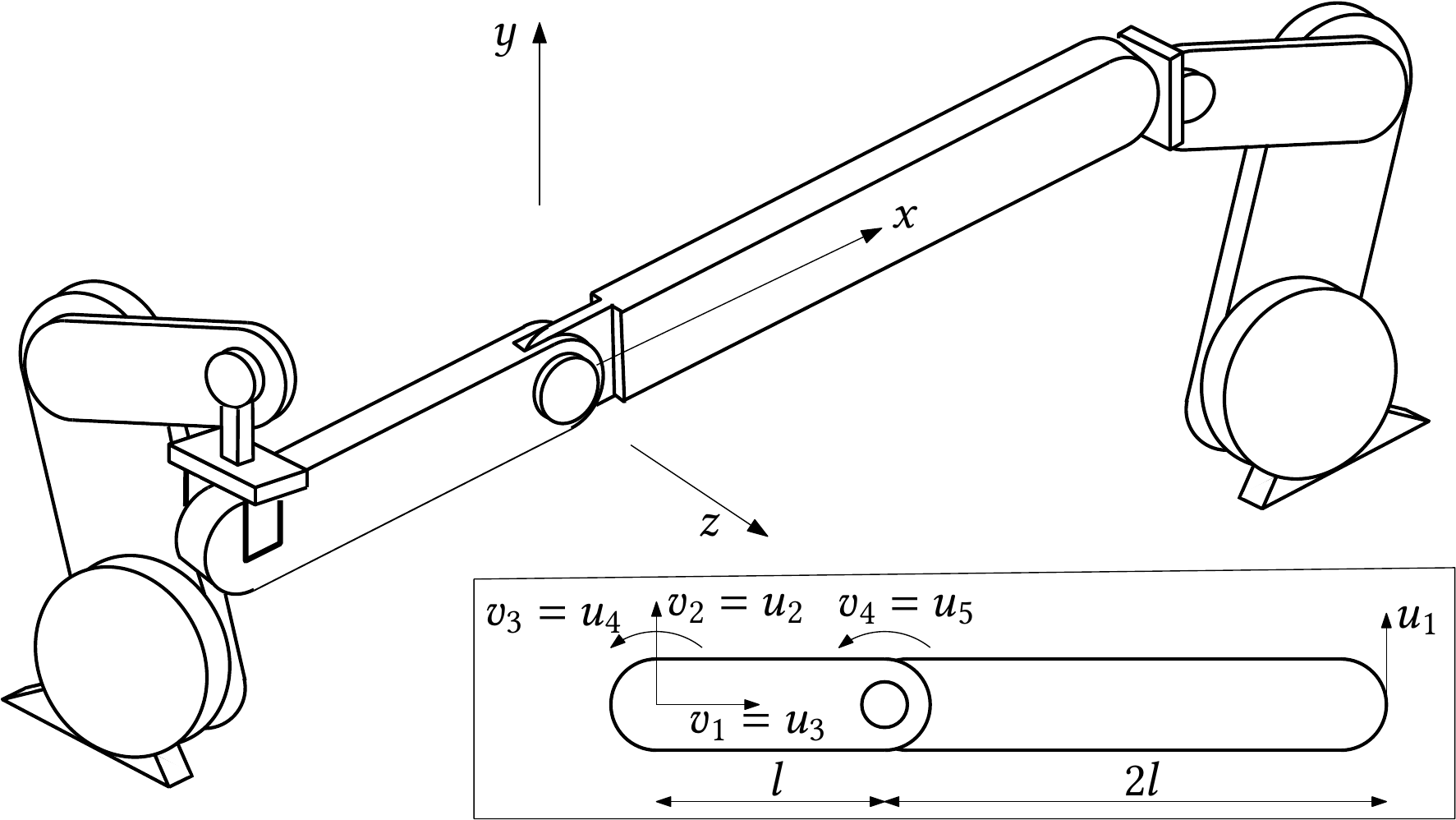}
\par\end{centering}
\caption{\label{fig:mechanism}Illustrating the system for Example \ref{subsec:Example-mechanism}.}
\end{figure}
Letting $\csa$ denote the configuration space of the two-link mechanism---the
object in this example---and considering the configuration $\conf$
for which the mechanism is flat (see Figure \ref{fig:mechanism}),
we have
\begin{equation}
\vs=T_{\conf}\csa\isom\{(v_{1}=:v_{x},v_{2}=:v_{y},v_{3}=:\go_{1},v_{4}=:\go_{2})\},
\end{equation}
where $v_{1},v_{2}$ are the horizontal and vertical component of
the velocity of the left end, respectively, $v_{3}$ is the angular
velocity of the left link, and $v_{4}$ is the angular velocity of
the right link relative that of the left link. Elements of the configuration
space for the collection of the supports, $\csb$, contain in addition
the location of the right end of the right-hand link, and so
\begin{equation}
\avs=T_{\fk(\conf)}\csb\isom\{(\avf_{1},\avf_{2}:=v_{y},\avf_{3}:=v_{x},\avf_{4}:=\go_{1},\avf_{5}:=\go_{2})\},
\end{equation}
where $\avf_{1}$ is the vertical velocity of the right end (see Figure
\ref{fig:mechanism}).

Correspondingly, the spaces of generalized external forces and generalized
forces at the supports may be written as 
\begin{equation}
\vs\du=\{(\fc_{x},\fc_{y},M_{1},M_{2})\},\qquad\avs\du=\{(\afc_{1},\dots,\afc_{5})\}
\end{equation}
where in particular, $\afc_{1}$ is the vertical friction force acting
on the right end.

A simple kinematic analysis of the mechanism gives
\begin{equation}
\jac=\left(\begin{array}{cccc}
0 & 1 & 3l & 2l\\
0 & 1 & 0 & 0\\
1 & 0 & 0 & 0\\
0 & 0 & 1 & 0\\
0 & 0 & 0 & 1
\end{array}\right),\qquad\image\jac=\left\{ \left(\begin{array}{c}
\avf_{1}=v_{y}+3l\omega_{1}+2l\omega_{2}\\
\avf_{2}=v_{y}\\
\avf_{3}=v_{x}\\
\avf_{4}=\go_{1}\\
\avf_{5}=\go_{2}
\end{array}\right)\right\} .
\end{equation}

As a norm on the space of grasping forces we use
\begin{equation}
\norm{\afc}=\frac{1}{\afc_{\yi}}\max\left\{ \afc_{1},\afc_{2},\afc_{3},\frac{\afc_{4}}{l},\frac{\afc_{5}}{10l}\right\} .
\end{equation}
(It is noted that it would be more physical to take $\sqrt{g_{2}^{2}+\afc_{3}^{2}}$,
rather than the maximum between these two components. However, our
choice is motivated by simplicity and by the fact that the Euclidean
norm is bounded by the maximum.) For the norm of the external loading
we chose
\begin{equation}
\norm{\fc}=\sqrt{\fc_{x}^{2}+\fc_{y}^{2}+\left(\frac{M_{1}}{l}\right)^{2}+\left(\frac{M_{2}}{l}\right)^{2}}.
\end{equation}
The corresponding primal norms are
\begin{equation}
\norm{\avf}=\afc_{\yi}(\abs{\avf_{1}}+\abs{\avf_{2}}+\abs{\avf_{3}}+\abs{\avf_{4}l}+\abs{10\avf_{5}l})\quad\text{and}\quad\norm v=\sqrt{v_{x}^{2}+v_{y}^{2}+(l\go_{1})^{2}+(l\go_{2})^{2}}.
\end{equation}

We can now write down the expression for the sensitivity as
\begin{equation}
\begin{split}R= & \sup_{v\neq0}\frac{\norm v}{\norm{\jac(v)}}\\
 & \sup_{v\neq0}\frac{\sqrt{v_{x}^{2}+v_{y}^{2}+(l\omega_{1})^{2}+(l\omega_{2})^{2}}}{\afc_{\yi}\abs{v_{y}+3l\omega_{1}+2l\omega_{2}}+\afc_{\yi}\abs{v_{y}}+\afc_{\yi}\abs{v_{x}}+\afc_{\yi}l\abs{\omega_{1}}+10\afc_{\yi}l\abs{\omega_{2}}}.
\end{split}
\end{equation}
We were not able to compute the supremum analytically. A numerical
computation using the commercial platform modeFronier resulted the
value $R=1.00$. It was also observed that the result is independent
of $l$ and the same result was obtained for links having equal lengths.

\subsection{\label{subsec:Example_Glass}Example: Grasping sensitivity for holding
a wineglass}

As a 3-dimensional example, we consider next the grasping sensitivity
of a wineglass held at four points as shown in Figure \ref{fig:glass}.
The four fingers hold the glass on the same horizontal circle of maximal
radius, $r$. The two pairs of diametrically opposed fingers are along
the $x$-axis and $y$-axis, respectively. 
\begin{figure}
\begin{centering}
\includegraphics[scale=0.75]{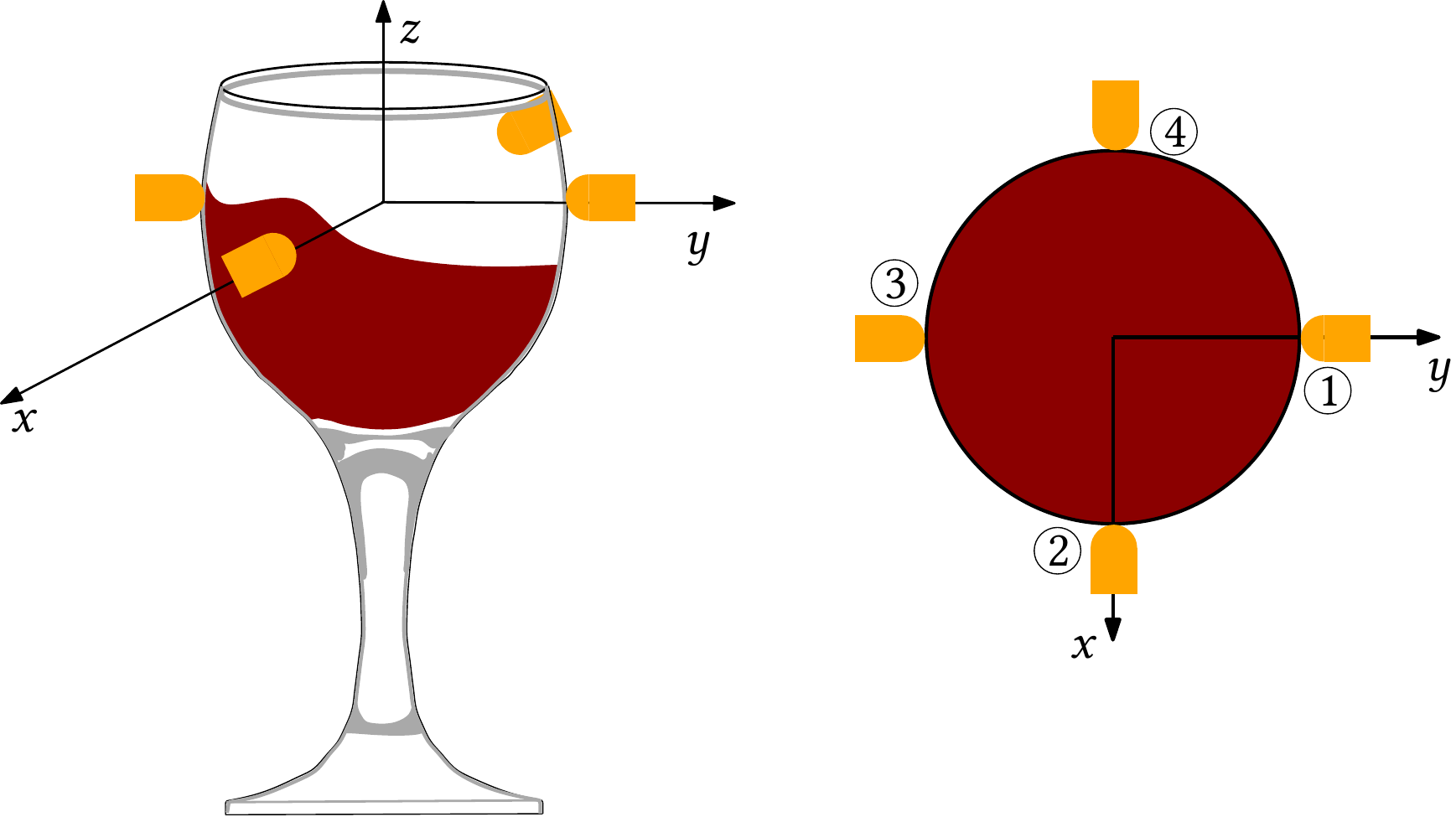}
\par\end{centering}
\caption{\label{fig:glass}Illustrating a wineglass held at four points}
\end{figure}

It is assumed that all four fingers apply equal normal grasping forces
on the glass which determine the maximal Coulomb friction, $\afc_{\yi}$,
at these points. The generalized velocity of the glass are described
by vectors $v\in\vs$ of the form $v=(v_{x},v_{y},v_{z},\go_{x},\go_{y},\go_{z})$
consisting of the components of the linear velocity of the center
of the glass (located at the origin) and the angular velocity.

Since we will be interested in the components of friction forces at
the four contact points, there are two relevant components at each
point. Thus, adopting the enumeration of the fingers as shown on the
right of Figure \ref{fig:glass}, the grasping generalized force $\afc\in\avs\du$
is of the form $\afc=(\afc_{1x},\afc_{1z},\afc_{2y},\afc_{2z},\afc_{3x},\afc_{3z},\afc_{4y},\afc_{4z})$.
It follows that the constraints generalized velocities $\avf\in\avs$
are represented in the form $\avf=(\avf_{1x},\avf_{1z},\avf_{2y},\avf_{2z},\avf_{3x},\avf_{3z},\avf_{4y},\avf_{4z})$.
Kinematics implies that 
\begin{equation}
\jac=\left(\begin{array}{cccccc}
1 & 0 & 0 & 0 & 0 & -r\\
0 & 0 & 1 & r & 0 & 0\\
0 & 1 & 0 & 0 & 0 & r\\
0 & 0 & 1 & 0 & -r & 0\\
1 & 0 & 0 & 0 & 0 & r\\
0 & 0 & 1 & -r & 0 & 0\\
0 & 1 & 0 & 0 & 0 & -r\\
0 & 0 & 1 & 0 & r & 0
\end{array}\right),
\end{equation}

As in the preceding example, we will not use the decomposition $\afc=(\afc_{1},\afc_{2})$
in the notation and we will simply write $\afc$ for $\afc_{2}$.
Next, we have to specify the norms chosen for $\avs\du$, $\vs\du$
and the corresponding primal vector spaces. Thus, denoting the maximal
admissible friction component by $\afc_{\yi}$, we use the norm
\begin{equation}
\norm{\afc}=\frac{1}{\afc_{\yi}}\max\{\abs{\afc_{1x}},\abs{\afc_{1z}},\abs{\afc_{2y}},\abs{\afc_{2z}},\abs{\afc_{3x}},\abs{\afc_{3z}},\abs{\afc_{4y}},\abs{\afc_{4z}}\},
\end{equation}
for elements of $\avs\du$ and the primal norm on $\avs$ is
\begin{equation}
\norm{\avf}=\afc_{\yi}(\abs{u_{1x}}+\abs{u_{1z}}+\abs{u_{2y}}+\abs{u_{2z}}+\abs{u_{3x}}+\abs{u_{3z}}+\abs{u_{4y}}+\abs{u_{4z}}).
\end{equation}
(The max-norm is used instead of the Euclidean for the friction at
each point in order to simplify the computations and because the Euclidean
norm is bounded by the max-norm.) For the generalized external forces,
represented in the form $\fc=(\fc_{x},\fc_{y},\fc_{z},M_{x},M_{y},M_{z})$,
we use the norm
\begin{equation}
\norm{\fc}=\sqrt{f_{x}^{2}+f_{y}^{2}+\fc_{z}^{2}+(M_{x}/r)^{2}+(M_{y}/r)^{2}+(M_{z}/r)^{2}},
\end{equation}
which gives the primal norm
\begin{equation}
\norm v=\sqrt{v_{x}^{2}+v_{y}^{2}+v_{z}^{2}+(r\omega_{x})^{2}+(r\omega_{y})^{2}+(r\omega_{z})^{2}}.
\end{equation}

It is concluded that the sensitivity of the grasping system is given
by
\begin{equation}
\begin{split}R & =\sup_{v\neq0}\frac{\norm v}{\norm{\jac(v)}}\\
 & =\frac{1}{\afc_{\yi}}\sup_{v\neq0}\frac{\sqrt{v_{x}^{2}+v_{y}^{2}+v_{z}^{2}+(r\omega_{x})^{2}+(r\omega_{y})^{2}+(r\omega_{z})^{2}}}{D},
\end{split}
\end{equation}
where
\begin{multline}
D=\abs{v_{x}-\omega_{z}r}+\abs{v_{z}+\omega_{x}r}+\abs{v_{y}+\omega_{z}r}+\abs{v_{z}-\omega_{y}r}\\
+\abs{v_{x}+\omega_{z}r}+\abs{v_{z}-\omega_{x}r}+\abs{v_{y}-\omega_{z}r}+\abs{v_{z}+\omega_{y}r}.
\end{multline}
Numerical computations using modeFrontier  give the approximate
value $R=0.500/\afc_{\yi}$. We obtained the same result for a number
of values for $r$. Thus finally, the grasping can support a combination
of external forces and external moments as long as their norms satisfy
$\norm{\fc}\les2.00\afc_{\yi}$.

\bigskip{}

\noindent \textbf{\textit{Acknowledgments.}} This work was partially
supported by the H.~Greenhill Chair for Theoretical and Applied Mechanics
and by the Pearlstone Center for Aeronautical Engineering Studies
at Ben-Gurion University.

\appendix

\section{\label{sec:H-B}The Helly Construction for the Extension of Linear
Functionals}

We review here the construction of the Hahn-Banach theorem (in the
finite dimensional case) for the extension of covectors. This construction
will be applied in some of the examples.

Consider a normed finite dimensional vector space $\avs$, $\dimension\avs=m$
and a proper vector subspace $\svs\subset\avs$, $\dimension\svs=n<m$.
Let $\afc_{0}\in\svs\du$ be a linear functional with 
\begin{equation}
\norm{\afc_{0}}=\sup_{\substack{\vf\neq0\\
\vf\in\svs
}
}\frac{\abs{\afc_{0}(\vf)}}{\norm{\vf}}
\end{equation}
be given. An extension $\afc\in\avs\du$ of $\afc_{0}$ is a covector
that satisfies the condition $\afc(\vf)=\afc_{0}(\vf)$ for all $\vf\in\svs$.
Some extension $\afc$ of $\afc_{0}$ that preserves the norm, that
is
\begin{equation}
\norm{\afc}:=\sup_{\substack{\avf\neq0\\
\avf\in\avs
}
}\frac{\abs{\afc(\avf)}}{\norm{\avf}}=\norm{\afc_{0}},
\end{equation}
is constructed as follows.

Since we remain in the finite dimensional setting, it is sufficient
to provide the construction for the case where $\svs$ is a subspace
of co-dimension $m-n=1$. For the general case, one can repeat the
process $(m-n)$-times. Thus, let $\avf_{1}\in\avs\setminus\svs$
be chosen and so every vector $u\in\avs$ may be expressed as 
\begin{equation}
\avf=\vf+t\avf_{1},
\end{equation}
for unique $\vf\in\svs$ and $t\in\reals$. It follows from linearity,
that every extension $\afc\in\avs\du$ satisfies
\begin{equation}
\afc(\avf)=\afc(\vf+t\avf_{1})=\afc_{0}(\vf)+t\afc(\avf_{1}).
\end{equation}
This condition implies that every extension $\afc$ is uniquely determined
by a single number $\afc(\avf_{1})$. One has to show, therefore,
that $\afc(\avf_{1})$ may be chosen so that $\norm{\afc}=\norm{\afc_{0}}$.

It is observed first, that 
\begin{equation}
\begin{split}\norm{\afc} & =\sup_{\substack{\vf\in\svs\\
t\in\reals
}
}\frac{\abs{\afc_{0}(\vf)+t\afc(\avf_{1})}}{\norm{\vf+t\avf_{1}}},\qquad t\ne0,\\
 & =\sup_{\substack{\vf\in\svs\\
t\in\reals
}
}\frac{\abs t\abs{\afc_{0}(\frac{\vf}{t})+\afc(\avf_{1})}}{\abs t\norm{\frac{\vf}{t}+\avf_{1}}},\qquad t\ne0,\\
 & =\sup_{\vf\in\svs}\frac{\abs{\afc_{0}(\vf)+\afc(\avf_{1})}}{\norm{\vf+\avf_{1}}}.
\end{split}
\end{equation}
Thus, the condition that $\norm{\afc}=\norm{\afc_{0}}$ will be satisfied
if for all $\vf\in\svs$,
\begin{equation}
\norm{\afc_{0}}\norm{\vf+\avf_{1}}\ges\abs{\afc_{0}(\vf)+\afc(\avf_{1})}.
\end{equation}
This gives the following conditions on $\afc(\avf_{1})$:
\begin{equation}
\norm{\afc_{0}}\norm{\vf+\avf_{1}}-\afc_{0}(\vf)\ges\afc(\avf_{1})\fall\vf\in\svs,
\end{equation}
and 
\begin{equation}
-\norm{\afc_{0}}\norm{\vf+\avf_{1}}-\afc_{0}(\vf)\les\afc(\avf_{1})\fall\vf\in\svs,
\end{equation}
which replacing $\vf$ with $-\vf$ is equivalent to 
\begin{equation}
-\norm{\afc_{0}}\norm{-\vf+\avf_{1}}+\afc_{0}(\vf)\les\afc(\avf_{1})\fall\vf\in\svs.
\end{equation}
Together, the conditions are
\begin{equation}
\afc_{0}(\vf)-\norm{\afc_{0}}\norm{\avf_{1}-\vf}\les\afc(\avf_{1})\les\norm{\afc_{0}}\norm{\avf_{1}+\vf}-\afc_{0}(\vf).\label{eq:sup-inf H-B}
\end{equation}
To show that the two regions are indeed separated, so that such a
number $\afc(\avf_{1})$ exists, it is noted that for any $\vf,\vf'\in\svs$,
\begin{equation}
\afc_{0}(\vf)+\afc_{0}(\vf')=\afc_{0}(\vf+\vf')\les\norm{\afc_{0}}\norm{\vf+\vf'}\les\norm{\afc_{0}}\norm{\vf+\avf}+\norm{\afc_{0}}\norm{\vf'-\avf},
\end{equation}
where on the right, $\avf$ is an arbitrary element of $\avs$. It
follows that
\begin{equation}
\afc_{0}(\vf')-\norm{\afc_{0}}\norm{\vf'-\avf}\les\norm{\afc_{0}}\norm{\vf+\avf}-\afc_{0}(\vf)\fall\vf,\vf'\in\svs,\,\avf\in\avs.
\end{equation}

\end{document}